\documentclass{article}

\PassOptionsToPackage{numbers}{natbib}
\usepackage[preprint]{neurips_2026}

\usepackage[utf8]{inputenc}
\usepackage[T1]{fontenc}
\usepackage{amsmath,amssymb,amsthm}
\newtheorem{definition}{Definition}
\newtheorem{proposition}{Proposition}
\newtheorem{lemma}{Lemma}
\newtheorem{remark}{Remark}
\usepackage{graphicx}
\usepackage{hyperref}
\usepackage{url}
\usepackage{booktabs}
\usepackage{nicefrac}
\usepackage{microtype}
\usepackage{xcolor}
\usepackage{tabularx}
\usepackage{subcaption}

\title{Observable- and Positional-Encoding-Dependent Symmetry Readout from Neural Network Weights}

\author{%
  Naoya Chiba \\
  D3 Center, The Unversity of Osaka\\
  \texttt{chiba@nchiba.net} \\
  \And
  Satoshi Sugiyama \\
  D3 Center, The Unversity of Osaka\\
  \texttt{sugiyama.satoshi.work@gmail.com} \\
  \And
  Yuki Uranishi \\
  D3 Center, The Unversity of Osaka\\
  \texttt{yuki.uranishi.cmc@osaka-u.ac.jp} \\
}

\begin{document}
\maketitle

\begin{abstract}

Post-hoc analysis of trained neural network weights often seeks to recover
geometric structure directly from the parameters. We show that, for
positional-encoding-equipped neural fields, the symmetry visible from weights is
not the true symmetry group itself, but an observable symmetry set determined by
the trained parameters, the positional encoding (PE), and readout observable.
We formulate this dependence through an exact observability hierarchy, $G_{\mathrm{obs}}^{\mathrm{exact}} \subseteq
G_{\mathrm{lift}}^{\mathrm{exact}}(\phi) \cap G_{\mathrm{true}}$, 
where $G_{\mathrm{lift}}^{\mathrm{exact}}(\phi)$ is the set of input
transformations that the PE can exactly lift to the feature space. The hierarchy
implies that even when a target function has a geometric symmetry, that symmetry
may be structurally invisible to weight-level observables if the PE does not
represent the corresponding transformation. We test this prediction using
MLPs trained on two-dimensional signed distance functions with multiple shape
symmetry groups, positional encodings, and Gram-based observables. The results
show a consistent PE-dependent pattern: DyadicAxisPE supports $D_4$-sensitive
readout but structurally suppresses $D_3$ rotations, TriAxisPE yields lower
$D_3$ / $D_6$ readout scores under the tested Gram observables by replacing
coordinate axes with three 120-degree-separated axes,
and 
random Fourier features mainly exhibit a $\pi$-rotation response under these readouts.
These findings show that PE design affects not only approximation behavior
but also which structures are accessible to post-hoc weight-level readouts.
This provides a basis for a principled observable-dependent symmetry readout.

\end{abstract}

\section{Introduction}

Equivariant neural networks \cite{cohen2016gcnn,bronstein2021geometric} have been successful in embedding symmetry groups into the architecture a priori. This paper studies the reverse question: can geometric symmetries be read out post hoc from the weights of a model that was not built to be equivariant?
When tackling this question, this expectation fails for a structural reason: the naive expectation ``the true symmetry group $G_\mathrm{true}$ can be recovered from the weights'' does not hold. What is visible from the weights is the \textbf{observable symmetry set} $G_\mathrm{obs}(\theta; \phi, \Phi)$, and what can be seen is determined not only by the true symmetry but also by the combination of the input representation (PE, $\phi$) and observable ($\Phi$). For example, axis-separable PE can exactly lift up to $D_4$ but does not provide an exact structural readout for $D_3$, whereas generic Random Fourier Features (RFF)  exactly lift only the $\pi$-rotation among the tested rotations, under the general-position assumption. That is, \emph{not all symmetries are equally observable from weight-level structural readouts}, and the exact-liftability bound of the PE constrains the upper limit of the symmetry classes that can be exposed by the tested structural observables.

Crucially, this work does not aim to discover unknown groups freely; instead, it is a post-hoc probing (transform-conditioned readout) that provides candidate transform families ($D_4$, rotation sweep) and evaluates the response of each observable to every transform. Building on this recognition, we propose a framework for observable-dependent symmetry readout centered on the two-factor $(\phi, \Phi)$ structure of $G_\mathrm{obs}$ and establish Proposition~1 (Exact Observability Hierarchy: $G_\mathrm{obs}^{\mathrm{exact}} \subseteq G_\mathrm{lift}^{\mathrm{exact}} \cap G_\mathrm{true}$) as an idealized exact-regime upper bound on post-hoc structural detection. We subsequently conduct empirical tests to test whether the empirical readout profiles are consistent with this hierarchy, specifically examining how the selection of PE and $\Phi$ influences which elements of $G_\mathrm{true}$ are mirrored in $G_\mathrm{obs}$. This is achieved through systematic experiments involving PE-equipped MLPs trained on 2D SDFs, encompassing various shape groups, multiple PEs, and different observables.

\section{Related Work}

This work relates to equivariant architectures, symmetry discovery, and
implicit neural representations, but focuses on what is observable from trained
weights under a given PE and observable readout.

\paragraph{Equivariant architectures and weight-space models.}
Equivariant neural networks embed group structure into the architecture
\cite{cohen2016gcnn,weiler2019steerable,satorras2021egnn} and have been
systematized as geometric deep learning \cite{bronstein2021geometric}. These
methods impose symmetry a priori, whereas our goal is to read out symmetry
post hoc from models that do not explicitly encode the target groups.
DWSNets \cite{navon2023dwsnets} are closest to our setting because they operate
directly on neural network weights and predict properties of implicit neural
representations from weight space. However, they target downstream prediction,
such as classification or regression, rather than which geometric symmetry
groups are structurally observable from trained weights. Our work instead
studies observable-dependent symmetry readout from the trained weights of
PE-equipped MLPs.

\paragraph{Symmetry discovery from data and models.}
LieGG \cite{moskalev2022liegg} extracts Lie algebra generators from gradient
information, LieSD \cite{hu2024liesd} extends related ideas to symmetry
discovery and scoring, and L-conv \cite{dehmamy2021lconv} embeds Lie algebra
structure directly into the network. These methods share a post-hoc or
discovery-oriented motivation with our work, but typically rely on data,
gradients, or model responses. In contrast, our main observable, the
weight-prefix Gram, provides a data-free weight-level readout. They also tend
to treat symmetry as a property of the data or learned function, whereas our
framework emphasizes observable dependence: even for the same trained model,
the visible symmetry can change with the choice of observable $\Phi$ and PE
$\phi$. This leads to the observable symmetry set
$G_\mathrm{obs}(\theta;\phi,\Phi)$ rather than a single recovered group.

\paragraph{Parameter-space and weight-space symmetry.}
Neural network weights contain parameter-space symmetries, such as neuron
permutations and positive rescalings in ReLU networks
\cite{godfrey2022intertwiner,zhao2025paramsym,liu2025parametersymmetry}, that
do not change the represented function. This motivates observables that suppress
irrelevant gauge-like degrees of freedom while retaining geometric information
relevant to the input-space transformations under consideration. The Gram-type
observables used here follow this principle, since $W^\top W$ is invariant to
neuron permutations, though not to all residual scaling degrees of freedom.

\paragraph{INR, positional encodings, and weight representations.}
Implicit neural representations commonly use Fourier features
\cite{tancik2020fourier} or periodic activations \cite{sitzmann2020siren} to
represent high-frequency signals. GRAPE \cite{zhang2025grape} provides a
representation-theoretic classification of positional encodings, identifying
which group actions can be represented by a PE. Our exact-liftability bound
uses this PE-level classification as an upper-bound condition for post-hoc
weight-level readout: a symmetry must first be liftable by the PE before it can
appear as an exact structural symmetry of weight observables. Separately,
Functa \cite{dupont2022functa} and inr2vec \cite{deluigi2023inr2vec} show that
useful information can be extracted from weights or latent representations of
implicit neural fields. We share the premise that neural field parameters encode
structure, but focus specifically on which geometric symmetries are observable
from these parameters and how this depends on the pair $(\phi,\Phi)$.

\section{Theory and Methods}

\subsection{Problem Setting}

This work introduces a framework for observable-dependent symmetry readout, systematically characterizing which geometric symmetries are \textbf{observable} from a trained neural network.
Let $X = \mathbb{R}^2$ be the input space, $H = \mathbb{R}^d$ (where $d$ is the PE output dimension; e.g., $d = 48$ for all structured PEs (DyadicAxisPE: $4K{=}48$ with $K{=}12$; TriAxisPE: $6K{=}48$ with $K{=}8$; RFF: $2n{=}48$ with $n{=}24$)) be the output space (feature space) of the positional encoding (PE), $\phi : X \to H$ be the PE, and $F_\theta : H \to \mathbb{R}$ be the subsequent neural network. The overall model is given by
\[
f_\theta(x) = F_\theta(\phi(x))
\]
where $\theta$ denotes all the trained parameters.

The geometric symmetries of interest are described by a transformation group $G$ that acts on the input space $X$. Each $g \in G$ acts on a point $x \in X$, and on a function $f$ via
\[
(g \cdot f)(x) = f(g^{-1} x).
\]
An object is said to be symmetric with respect to a group $G_\mathrm{true}$ when the corresponding true function $f_*$ satisfies
\[
f_*(g^{-1}\, x) = f_*(x) \quad \text{for all } g \in G_\mathrm{true}.
\]

Naively, one would like to directly recover $G_\mathrm{true}$ from the trained weights $\theta$. In practice, however, the weights themselves do not directly retain geometric groups in the input space; symmetry manifests indirectly through positional encoding, the network's internal representations, and the choice of observable. Therefore, this work first provides a theoretical formulation of what symmetry is visible from the weights and then organizes what is recoverable and what is in principle difficult to observe.

\subsection{Positional Encoding as a Representation}

Positional encoding (PE) is a mapping that projects low-dimensional input coordinates into a higher-dimensional feature space using trigonometric functions or similar bases, and is widely used in implicit neural representations (INRs) such as Neural Radiance Fields (NeRF) \cite{mildenhall2020nerf,tancik2020fourier,sitzmann2020siren}. Plain MLPs tend to preferentially learn low-frequency functions (spectral bias \cite{rahaman2019spectral}), and PE mitigates this bias by lifting inputs into a higher-dimensional space, enabling the learning of functions with high-frequency content.

In this work, we view PE $\phi$ not merely as preprocessing, but as a feature map that lifts geometric transformations to the feature space \cite{zhang2025grape}. When a geometric transformation $g$ (rotation, reflection, translation, etc.) is applied to an input $x$, passing $gx$ through the PE yields $\phi(gx)$. A natural question is whether $\phi(gx)$ can be computed directly from $\phi(x)$, that is, whether the geometric transformation can be represented as an operation in the feature space.

We focus on the case where $\phi(gx)$ is expressible as a linear transformation of $\phi(x)$.
That is, there exists a linear map $\rho(g)$\footnote{In general, PEs with bias terms may induce affine actions, but for all PEs considered in this paper (DyadicAxisPE, TriAxisPE, and RFF), $\rho(g)$ is realized as a linear map.} on $H$ such that
\[
\phi(g\, x) = \rho(g)\, \phi(x),
\]
$\phi$ is said to be $G$-equivariant with respect to $\rho$. Here, $\rho$ is the group representation that lifts the geometric action on the input space to the feature space. The linearity of this lift is essential because each MLP layer is a composition of a linear transformation (weight matrix) and nonlinear activation. A linear lift on the PE allows the effect of geometric transformations to be tracked through algebraic operations on weight matrices, providing a foundation for symmetry readout from weights.

From this perspective, the positional encoding determines which input transformations can be tracked exactly as linear feature-space actions.
That is, if $\phi$ is $G$-equivariant for a given transformation group, the group action can be explicitly tracked in feature space. Conversely, for transformations for which no such lift exists, even if the symmetry is natural in the input space, it does not appear as an exact representation within the model.

Therefore, the problem of reading out symmetry from trained weights depends first on how well the positional encoding retains the group actions. This is the theoretical starting point of this work.

We define the PEs used in this work and consolidate their exact liftability in Lemma~\ref{lem:exact-lift}.

\paragraph{DyadicAxisPE.}
An axis-separated PE with dyadic frequen, $\rho(g)$ is realized as a linear map. cies \cite{tancik2020fourier}, with an output dimension $4K$ for an octave count $K$ is given by
\[
\phi_\mathrm{Dyadic}(x) = \bigl[\{\sin(2^k \pi x_1),\, \cos(2^k \pi x_1),\, \sin(2^k \pi x_2),\, \cos(2^k \pi x_2)\}_{k=0}^{K-1}\bigr].
\]

\paragraph{Random Fourier Features (RFF).}
An isotropic PE using randomly sampled frequency vectors $\omega_i \in \mathbb{R}^2$ and biases $b_i \in \mathbb{R}$ ($i = 1, \dots, n$) \cite{rahimi2007random,tancik2020fourier} with an output dimension $2n$ is defined as
\[
\phi_\mathrm{RFF}(x) = \bigl[\sin(\omega_1^\top x + b_1),\, \cos(\omega_1^\top x + b_1),\, \dots,\, \sin(\omega_n^\top x + b_n),\, \cos(\omega_n^\top x + b_n)\bigr].
\]

\paragraph{Tri-Axis Dyadic PE (TriAxisPE).}
Extension of DyadicAxisPE by replacing the two coordinate axes with three axes separated by $2\pi/3$. Let $v_0 = (1,0)^\top$, $v_1 = R_{2\pi/3}\,v_0$, $v_2 = R_{4\pi/3}\,v_0$. The encoding is:
\[
\phi_\mathrm{tri}(x) = \bigl[\{\sin(2^k\pi\, v_j^\top x),\, \cos(2^k\pi\, v_j^\top x)\}_{j=0,\ldots,2;\,k=0,\ldots,K-1}\bigr],
\]
with an output dimension $6K$.

\begin{lemma}[Exact rotation/reflection liftability of structured PEs]
\label{lem:exact-lift}
For the three PEs defined above (
assuming the RFF frequency set $\{\omega_i\}$ is in general position, i.e., it is not closed under the tested rotations or reflections except for sign reversal):
\begin{enumerate}
\item[(i)] $\phi_\mathrm{Dyadic}$ exactly lifts $D_4$ (generated by $\pi/2$ rotation and axis reflections); transformations outside $D_4$ (in particular $\pi/3$ and $2\pi/3$ rotations) are not exactly lifted.
\item[(ii)] $\phi_\mathrm{tri}$ exactly lifts $D_6$ (containing $D_3$); transformations outside $D_6$ (in particular $\pi/2$ rotation) are not exactly lifted.
\item[(iii)] $\phi_\mathrm{RFF}$ exactly lifts $Z_2$ ($\pi$ rotation); transformations outside $Z_2$ (rotations at other angles and reflections) are not exactly lifted.
\end{enumerate}
\end{lemma}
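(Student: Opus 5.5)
The plan is to reduce every claim to a statement about how the frequency set of each PE behaves under the linear part of $g$. All three encodings have the form $\phi(x) = [\sin(\omega^\top x + b_\omega), \cos(\omega^\top x + b_\omega)]_{\omega\in\Omega}$, with $b_\omega = 0$ for the structured PEs. Here $\Omega$ is a finite frequency multiset:
\begin{itemize}
\item $\Omega_\mathrm{Dyadic} = \{2^k\pi e_1, 2^k\pi e_2\}$;
\item $\Omega_\mathrm{tri} = \{2^k\pi v_j\}$;
\item $\Omega_\mathrm{RFF} = \{\omega_i\}$.
\end{itemize}
For an orthogonal $g$ (rotation or reflection about the origin) we have $\phi(gx)$ with components $\sin((g^\top\omega)^\top x + b_\omega)$ and $\cos((g^\top\omega)^\top x + b_\omega)$.

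\textbf{Sufficiency.} I will show that if $g^\top$ maps $\Omega$ into $\pm\Omega$, then $\rho(g)$ exists and is a signed permutation, possibly combined with $2\times 2$ rotation blocks when biases are present.
\begin{itemize}
\item If $g^\top\omega = \omega'$, the pair $(\sin,\cos)$ at $\omega$ is sent to the pair at $\omega'$. For RFF the bias mismatch $b_\omega - b_{\omega'}$ is absorbed by a fixed $2\times 2$ rotation via the angle-addition formulas.
\item If $g^\top\omega = -\omega'$, we use $\sin(-t) = -\sin t$ and $\cos(-t) = \cos t$, again with a bias correction.
\end{itemize}
Since $\Omega_\mathrm{Dyadic}$ up to sign is invariant under $R_{\pi/2}$ and under the axis reflections, it is invariant under all of $D_4$, which gives the positive part of (i). For (ii), $\{\pm v_j\}$ is the set of six unit vectors at $60^\circ$ spacing. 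It is invariant under $R_{\pi/3}$ and under the reflections of the hexagon, so $D_6$ is lifted. For (iii), $g = -I$ sends $\omega_i \mapsto -\omega_i$. The pair becomes $(\sin(-\omega_i^\top x + b_i), \cos(-\omega_i^\top x + b_i))$, and rewriting $-\omega_i^\top x + b_i = -(\omega_i^\top x + b_i) + 2b_i$ shows it is a fixed linear combination of the original pair.

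\textbf{Necessity.} This is the main obstacle: I must show that no linear $\rho(g)$ exists when $g^\top\Omega \not\subseteq \pm\Omega$. The tool is linear independence of distinct exponentials. The components of $\phi$ span $\mathrm{span}\{e^{\pm i\omega^\top x}:\omega\in\Omega\}$, and distinct characters $e^{i\xi^\top x}$ are linearly independent as functions on $\mathbb{R}^2$. If $\phi(gx) = \rho\,\phi(x)$ held for all $x$, then each component of $\phi\circ g$ would contain $e^{i(g^\top\omega)^\top x}$ with nonzero coefficient, and this would have to lie in $\mathrm{span}\{e^{\pm i\omega'^\top x}\}$. That forces $g^\top\omega \in \pm\Omega$.

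It then suffices to exhibit one offending frequency for each claimed non-lift:
\begin{itemize}
\item For (i), take $R_{\pi/3}$ or $R_{2\pi/3}$ applied to $\pi e_1$. The result is not axis-aligned, so it does not lie in $\pm\Omega_\mathrm{Dyadic}$.
\item For (ii), $R_{\pi/2}v_0 = e_2$, and $e_2$ is not parallel to any $v_j$.
\item For (iii), the general-position hypothesis is precisely the statement that $g^\top\omega_i \notin \pm\Omega_\mathrm{RFF}$ for some $i$ whenever $g \neq \pm I$ among the tested transformations. The only delicate point is that it must be read as ``closed up to sign'' for $g = -I$, which is automatic.
\end{itemize}

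Finally, I will note the scope of the argument. The statement is restricted to the tested rotations and reflections. Interpreting ``transformations outside'' as the finitely many tested angles, or as all $g \notin D_n$ for the structured PEs, follows from the same frequency-orbit argument, because the stabilizer of $\pm\Omega$ in $O(2)$ is exactly $D_4$, $D_6$, and $\{\pm I\}$ respectively.
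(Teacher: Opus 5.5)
Your proof is correct, but it is organized differently from the paper's.

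\textbf{Your route.} You prove one criterion: for orthogonal $g$, an exact linear lift exists if and only if $g^\top\Omega\subseteq\pm\Omega$. Sufficiency comes from angle addition and necessity from linear independence of distinct characters, and you then read off all three cases at once.

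\textbf{The paper's route.} The paper argues case by case.
\begin{itemize}
\item For the positive parts it writes down explicit matrices: signed permutations for DyadicAxisPE and TriAxisPE, and $2\times 2$ bias-correcting blocks for the RFF $\pi$-rotation.
\item For the DyadicAxisPE non-lift of $R_{\pi/3}$ and $R_{2\pi/3}$ it uses additive separability. Any linear combination of Dyadic components has the form $f(x_1)+g(x_2)$, so its mixed partial $\partial^2/\partial x_1\partial x_2$ vanishes, whereas a rotated, non-axis-aligned plane wave has a nonzero mixed partial.
\item Plane-wave independence is invoked only for RFF rotations $R_\alpha$.
\end{itemize}

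\textbf{What each buys.} The separability argument is more elementary and shows directly why the obstruction is tied to axis-separable structure, but it is specific to DyadicAxisPE. Your route gains several things:
\begin{itemize}
\item It gives an if-and-only-if characterization: the liftable part of $O(2)$ is the stabilizer of $\pm\Omega$.
\item It proves the non-liftability of $R_{\pi/2}$ under TriAxisPE, which the paper's proof of (ii) never argues.
\item It covers reflections for RFF, whereas the paper treats only rotations.
\item It needs just one offending frequency $g^\top\omega_i\notin\pm\Omega$. That is exactly the lemma's ``not closed'' hypothesis, while the paper's proof quietly strengthens it to ``for all $i$''.
\end{itemize}

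\textbf{Two small corrections.} First, the offending Dyadic frequency should be $g^\top(\pi e_1)$ rather than $g(\pi e_1)$. This is harmless, since both are non-axis-aligned. Second, your closing claim that the full $O(2)$-stabilizer of $\pm\Omega_{\mathrm{RFF}}$ is $\{\pm I\}$ needs a genericity assumption stronger than the stated one, which concerns only the tested transformations.
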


\begin{remark}
DyadicAxisPE and TriAxisPE have complementary exact liftability bounds ($D_4$ and $D_6 \supset D_3$, respectively). The non-liftability of $D_3$ under DyadicAxisPE is specific to its axis-separable structure and is not an inherent limitation of dyadic-frequency PEs. TriAxisPE recovers exact liftability for $D_3$ and $D_6$ by the minimal change of replacing coordinate axes with three $2\pi/3$-separated axes, while preserving the dyadic frequency and sine/cosine pair structure.
\end{remark}

\noindent
The proof is provided in Appendix~\ref{app:proofs}.

\subsection{Functional Symmetry and Observable Symmetry}

The true symmetry of an object is inherently defined as the invariance of the function $f_*$: $f_*(gx) = f_*(x)$. The most direct way to verify this with a trained model $f_\theta$ is dense forward computation, which evaluates $f_\theta(gx) \approx f_\theta(x)$ at many input points. However, this is an input--output-level verification of how well $f_\theta$ approximates $f_*$ and does not reveal how symmetry is encoded inside the model. The goal of this work is not such input--output verification, but rather reading out symmetry from the weights and internal representations of the trained model (Figure~\ref{fig:overview}).

Accordingly, we write the observable computed from the trained model as $\Phi(\theta)$. $\Phi$ may be a quantity in the weight space or may use internal representations or limited activation information as needed. The key point is that $\Phi$ serves as a window for summarizing the internal structure of the model and measuring its stability under geometric transformations.

For a transformation $g \in G$, we define the corresponding action $T_g$ on the model’s side. The $g$-transformed model $T_g\,\theta$ is the parameter representing the shape $g \cdot S$ obtained by applying $g$ to the shape $S$ that $\theta$ represents, i.e.,
\[
f_{T_g\,\theta}(x) = f_\theta(g^{-1}x).
\]
When $g \in G_\mathrm{lift}^{\mathrm{exact}}$, we have $\phi(g^{-1}x) = \rho(g^{-1})\,\phi(x)$, so
\[
f_{T_g\,\theta}(x) = F_\theta\!\bigl(\rho(g^{-1})\,\phi(x)\bigr).
\]
This definition makes $T$ a genuine left action on the parameter space ($T_{gh} = T_g T_h$), consistent with the definition of $\rho$.

Since $\rho$ acts only on PE outputs, $T_g\theta$ is realized simply by modifying the first-layer weight (which receives the PE output) as $W_0 \to W_0\,\rho(g^{-1})$, leaving subsequent layers and all biases unchanged; this realization holds exactly regardless of activation functions. Consequently, any quantity built from $W_0$ inherits the $\rho$-action. As the simplest example, taking the Gram matrix $G_0 = W_0^\top W_0$ at the PE output as the observable gives
\[
G_0(T_g\,\theta) = \rho(g^{-1})^\top\, G_0\, \rho(g^{-1})
\]
\footnote{In experiments, we use the homogeneous extension $\tilde{G}_0 = \bigl[\begin{smallmatrix} G_0 & W_0^\top b_0 \\ b_0^\top W_0 & \|b_0\|^2 \end{smallmatrix}\bigr]$ with the corresponding $\tilde{\Pi}_g = \bigl[\begin{smallmatrix} \rho(g^{-1}) & 0 \\ 0 & 1 \end{smallmatrix}\bigr]$ to account for bias terms (Appendix~\ref{app:grid}).}. This is precisely the prefix Gram $L_0$ used in our main experiments. Deeper prefixes ($P_l = W_l \cdots W_0$ with prefix Gram $G_l = P_l^\top P_l$) and the effective weight $W_\mathrm{eff} = W_L \cdots W_0$ (the end-to-end linear map when activations are ignored, a quantity extensively studied in deep linear network analysis \cite{saxe2014exact,arora2019implicit}) satisfy $P_l \to P_l\,\rho(g^{-1})$ and $W_\mathrm{eff} \to W_\mathrm{eff}\,\rho(g^{-1})$, inducing the same $\rho$-sandwich structure as the shallow prefix. A similar structure is induced via $\rho$ for general observable quantities.
Since $G$ is closed under inversion, the detection set $G_\mathrm{obs} = \{g \mid \Phi(T_g\theta) = \Phi(\theta)\}$ is invariant under $g \leftrightarrow g^{-1}$.

In this framework, the group action is handled in the feature space via $\rho$, which differs from dense forward ($f_\theta(g^{-1}x) \approx f_\theta(x)$) that acts on the input space in the space on which the group action is applied. The universality of MLPs allows training to realize function-level symmetry (\textbf{functional symmetry}) independently of PE liftability, whereas the invariance of observables via $\rho$ (\textbf{structural symmetry}) is constrained by the PE's algebraic structure. The subsequent $G_{\mathrm{obs}}(\theta;\phi,\Phi,\mathcal{T})=\{g\in\mathcal{T}: \Phi(T_g\theta)\approx\Phi(\theta)\}$ and hierarchy are described at the structural level. The gap between the two notions and the dual role of the PE in training and observation are detailed in Appendix~\ref{app:dense_forward}.

\begin{definition}[Observable Symmetry Set]
\label{def:obs-sym-set}
The observable symmetry set with respect to trained parameters $\theta$, PE $\phi$, and observable $\Phi$ is defined as
\[
G_\mathrm{obs}(\theta; \phi, \Phi) = \{ g \in G \mid \Phi(T_g \theta) \approx \Phi(\theta) \}.
\]
Here, $\approx$ denotes approximate equality that accounts for numerical error, approximate representations, and training error.
\end{definition}

The above definition serves as a generic placeholder for observable symmetry sets. In the following, we first define the exact version $G_\mathrm{obs}^{\mathrm{exact}}$ restricted to $g \in G_\mathrm{lift}^{\mathrm{exact}}$, and then introduce the operational extension $D_\mathrm{op}$ that allows approximate lifts for $g \notin G_\mathrm{lift}^{\mathrm{exact}}$. The operational quantification of the approximate equality $\approx$ is formalized as the symmetry score $S(g;\,\theta,\Phi)$ in \S3.5.

The key point of this definition is to capture symmetry not as ``an absolute property that the model possesses,'' but as an invariant structure that appears stable through a given observable. Therefore, in this work, symmetry is generally not a property of $\theta$ alone, but a quantity that depends on the triple $(\theta, \phi, \Phi)$.

\begin{figure}[htbp]
\centering
\includegraphics[width=\linewidth]{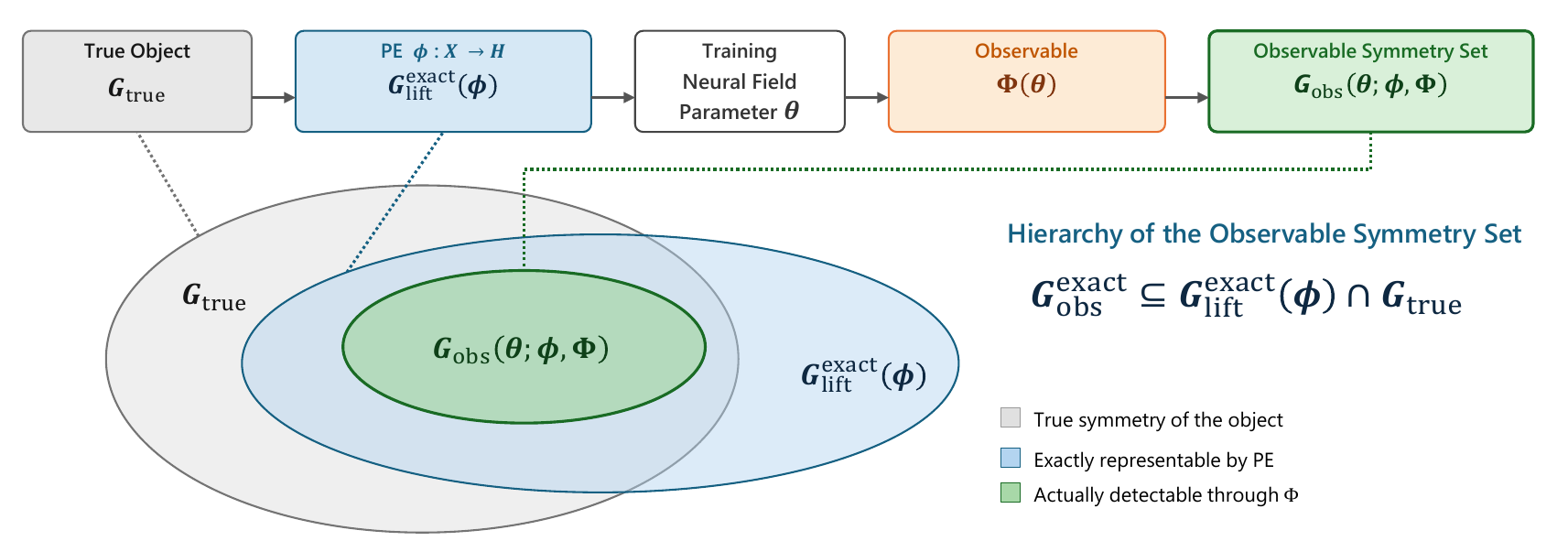}
\caption{Overview of the observable-symmetry readout framework. Weight-level
readout yields an observable symmetry set
$G_\mathrm{obs}(\theta;\phi,\Phi)$, not $G_\mathrm{true}$ itself. The Venn
diagram shows the exact-regime hierarchy
$G_\mathrm{obs}^{\mathrm{exact}} \subseteq
G_\mathrm{lift}^{\mathrm{exact}}(\phi) \cap G_\mathrm{true}$; the dashed region
indicates the operational regime based on approximate lifts.}
\label{fig:overview}
\end{figure}

\subsection{Hierarchy of Observable Symmetry}

Observable symmetry has three levels (Figure~\ref{fig:overview}). (1) The true symmetry group $G_\mathrm{true}$ of the object; (2) The PE-liftable group $G_\mathrm{lift}^{\mathrm{exact}}(\phi) = \{ g \in G \mid \phi(gx) = \rho(g)\,\phi(x) \}$ is determined solely by PE's algebraic properties of PE. (3) The observable symmetry set $G_\mathrm{obs}^{\mathrm{exact}}(\theta; \phi, \Phi) \subseteq G_\mathrm{lift}^{\mathrm{exact}} \cap G_\mathrm{true}$, which is the symmetry that is visible through the chosen observable $\Phi$. This nested structure forms the theoretical backbone of the study.

\begin{definition}[Symmetry-sufficient observable]
\label{def:sym-sufficient}
An observable $\Phi$ is said to be \textbf{symmetry-sufficient} if for any $g \in G_\mathrm{lift}^{\mathrm{exact}}$,
\[
\Phi(T_g\, \theta) = \Phi(\theta) \;\Longrightarrow\; f_\theta(g^{-1}\, x) = f_\theta(x) \quad \text{for all } x \in X.
\]
Invariance at the observable level implies invariance at the functional level.
\end{definition}

\begin{proposition}[Exact Observability Hierarchy]
\label{prop:hierarchy}
If $f_\theta$ has sufficiently converged to $f_*$ ($\sup_{x \in X} |f_\theta(x) - f_*(x)| \leq \delta,\; \delta \to 0$), and $\Phi$ is compatible with the induced $T_g$ action and symmetry-sufficient (Definition~\ref{def:sym-sufficient}), then
\[
G_\mathrm{obs}^{\mathrm{exact}}(\theta;\, \phi, \Phi) \;=\; \{ g \in G_\mathrm{lift}^{\mathrm{exact}} \mid \Phi(T_g\,\theta) = \Phi(\theta) \}
\]
satisfies $G_\mathrm{obs}^{\mathrm{exact}}(\theta; \phi, \Phi) \subseteq G_\mathrm{lift}^{\mathrm{exact}}(\phi) \cap G_\mathrm{true}$.
\end{proposition}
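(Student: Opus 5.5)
The inclusion splits into two parts: $G_\mathrm{obs}^{\mathrm{exact}} \subseteq G_\mathrm{lift}^{\mathrm{exact}}(\phi)$ and $G_\mathrm{obs}^{\mathrm{exact}} \subseteq G_\mathrm{true}$. The first is immediate from the definition, since $G_\mathrm{obs}^{\mathrm{exact}}$ is built as a subset of $G_\mathrm{lift}^{\mathrm{exact}}$; membership already requires $g \in G_\mathrm{lift}^{\mathrm{exact}}$. This restriction is also what makes $T_g\theta$ well-defined through the linear lift $\rho(g^{-1})$, via the first-layer realization $W_0 \to W_0\rho(g^{-1})$. All the content therefore lies in the second inclusion.

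For the second inclusion, fix $g \in G_\mathrm{obs}^{\mathrm{exact}}$, so that $g \in G_\mathrm{lift}^{\mathrm{exact}}$ and $\Phi(T_g\theta) = \Phi(\theta)$. Compatibility of $\Phi$ with the induced action guarantees that $\Phi(T_g\theta)$ is the observable of the model $x \mapsto f_\theta(g^{-1}x)$. Symmetry-sufficiency (Definition~\ref{def:sym-sufficient}) then gives $f_\theta(g^{-1}x) = f_\theta(x)$ for all $x \in X$. Next, transfer this functional invariance to $f_*$ by the triangle inequality:
\[
|f_*(g^{-1}x) - f_*(x)| \le |f_*(g^{-1}x) - f_\theta(g^{-1}x)| + |f_\theta(g^{-1}x) - f_\theta(x)| + |f_\theta(x) - f_*(x)| \le 2\delta .
\]
This uses the uniform bound at the two points $x$ and $g^{-1}x$, both of which lie in $X$ because $g$ acts on $X$. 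The middle term vanishes by the previous step.

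The main obstacle is the limiting argument. For fixed $\delta > 0$ the bound yields only approximate invariance, $|f_*(g^{-1}x) - f_*(x)| \le 2\delta$, not exact invariance. The hypothesis ``$\delta \to 0$'' has to be read as a family of trained parameters $\theta_\delta$ with $\sup|f_{\theta_\delta} - f_*| \le \delta$, where $g$ lies in $G_\mathrm{obs}^{\mathrm{exact}}(\theta_\delta)$ along the sequence, or at least along a subsequence. Since the left-hand side does not depend on $\theta$, letting $\delta \to 0$ gives $f_*(g^{-1}x) = f_*(x)$ for every $x$. Hence $g^{-1} \in G_\mathrm{true}$, and because $G_\mathrm{true}$ is a group, $g \in G_\mathrm{true}$. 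I would state this reading explicitly. An equivalent route is to note that $f_*$ is a fixed function, so any $2\delta$-bound holding for all $\delta > 0$ forces equality.

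I would close with a remark that the inclusion can be strict. Without symmetry-sufficiency, $\Phi$ may fail to detect elements of $G_\mathrm{lift}^{\mathrm{exact}} \cap G_\mathrm{true}$ whose functional symmetry is realized through non-$\rho$-equivariant weights. This explains why the result is stated as an upper bound rather than as an equality.
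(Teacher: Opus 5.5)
Your proposal is correct and follows the paper's proof: the first inclusion holds by definition, and for the second you use symmetry-sufficiency to get $f_\theta(g^{-1}x)=f_\theta(x)$, the triangle inequality to get $|f_*(g^{-1}x)-f_*(x)|\le 2\delta$, and then let $\delta\to 0$. Your reading of $\delta\to 0$ as a family $\theta_\delta$ with $g\in G_\mathrm{obs}^{\mathrm{exact}}(\theta_\delta)$ makes precise a step the paper leaves informal; the closing remark on strictness is slightly misattributed, though, because non-detection of elements of $G_\mathrm{lift}^{\mathrm{exact}}\cap G_\mathrm{true}$ comes from the missing converse (functional invariance need not force $\Phi(T_g\theta)=\Phi(\theta)$), and it can occur even when $\Phi$ is symmetry-sufficient.
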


\noindent
The proof is provided in Appendix~\ref{app:proofs}. The experiments test the consistency of this hierarchy.

This hierarchy cleanly separates three cases: symmetry does not exist, symmetry exists in the representation but is not observed, and symmetry is not lifted into the representation. Training convergence also affects $G_\mathrm{obs}$ (Appendix~\ref{app:dynamics}); the main results isolate the PE and $\Phi$ effects while holding this factor fixed.

\subsection{Quantification of Symmetry: Exact Identification and Operational Detection}

The \textbf{symmetry score} of transformation $g$ with respect to observable $\Phi$ is
\[
S(g;\, \theta, \Phi) = \frac{\| \Phi(T_g\, \theta) - \Phi(\theta) \|_F}{\| \Phi(\theta) \|_F}
\]
where $\|\cdot\|_F$ denotes the Frobenius norm. $S = 0$ signifies a perfect symmetry. In the experiments, instead of testing $S = 0$ directly, we used the thresholded set $G_\mathrm{obs}^{\varepsilon} = \{ g \in G_\mathrm{lift}^{\mathrm{exact}} \mid S(g;\,\theta,\Phi) < \varepsilon \}$ (for $\varepsilon > 0$, $G_\mathrm{obs}^{\varepsilon} \supseteq G_\mathrm{obs}^{\mathrm{exact}}$).

\paragraph{PE-level Procrustes residual.}
Whether PE $\phi$ admits an exact lift for transformation $g$ is determined by the orthogonal Procrustes residual \cite{schonemann1966generalized}
\[
r_P(g) = \min_{Q \in O(d)} \frac{\|\phi(gX) - Q\,\phi(X)\|_F}{\|\phi(gX)\|_F}.
\]
For all PEs in this work, exact lifts were realized as orthogonal transformations (permutation matrices or block rotation matrices), making orthogonal Procrustes the appropriate estimator. $r_P \approx 0$ (machine precision $\sim 10^{-14}$) confirms the existence of an exact lift, $r_P > 0$ confirms its absence.

For transforms $g$ outside $G_\mathrm{lift}^{\mathrm{exact}}$, the exact lift $\rho(g)$ does not exist, so $\Phi(T_g\theta) \approx \Phi(\theta)$ in Definition~\ref{def:obs-sym-set} cannot be evaluated as is. We define the best orthogonal approximation as the solution to the above Procrustes problem as follows:
\[
\hat{\rho}(g) = \operatorname*{argmin}_{Q \in O(d)} \|\phi(gX) - Q\,\phi(X)\|_F
\]
(numerical procedure in Appendix~\ref{app:rff_scaling}). Since $\hat\rho(g) = \rho(g)$ when $g \in G_\mathrm{lift}^{\mathrm{exact}}$, replacing $\rho$ by $\hat\rho$ defines an operational approximation outside the exact regime. The operational score $S_\mathrm{op}(g;\,\theta, \Phi)$ is defined as the symmetry score with $\rho$ replaced by $\hat\rho$ in the computation of $\Phi(T_g\,\theta)$, and the operational detection set is $D_\mathrm{op}(\theta;\, \Phi,\, \varepsilon) = \{ g \in \mathcal{T} \mid S_\mathrm{op}(g;\, \theta, \Phi) < \varepsilon \}$ where $\mathcal{T}$ is the sampled transform family. The theory section uses the exact regime ($G_\mathrm{obs}^{\mathrm{exact}}$), whereas the experiments use the operational regime ($D_\mathrm{op}$). Because the approximation error in the learned weights makes $S(g) > 0$ at the observable level even for exact-lift transforms (e.g., $S \approx 0.2$ for $D_4$ transforms under DyadicAxisPE $+$ prefix Gram $L_0$), $\varepsilon$ is used in experiments for basic validation (absence of false positives), whereas the primary analysis relies on relative score profiles rather than hard-threshold detection.

From the above organization, the experiments address two primary questions: (i) how the PE constrains the upper bound of observable symmetry, and (ii) within weight-only observables, how dominant is the choice of PE in determining the readout sensitivity.

\section{Experiments and Results}

\subsection{Experimental Setup}

We extracted geometric symmetries from the post-training weights and internal representations of MLPs (depth 5, width 128, ReLU) trained on 2D signed distance functions (SDFs). Training protocol: 10,000 points sampled from $[-1,1]^2$ (50\% uniform, 50\% boundary-enriched), batch size 256, MSE loss, Adam \cite{kingma2015adam} ($\mathrm{lr}=10^{-3}$), CosineAnnealingLR \cite{loshchilov2017sgdr}, 2,000 epochs, rotation sweep at 72 angles ($5^{\circ}$ intervals) with 512 sample points for Procrustes estimation, 5 seeds per condition. The main text demonstrates representative cases using six shapes; the Appendices assess robustness and variation across all 16 shapes, the full PE $\times$ observable grid, noise sensitivity, learning dynamics, and translational symmetry.

\paragraph{Target shapes.}
The main experiments used six shapes (Figure~\ref{fig:shape_gallery}): one representative per discrete symmetry group ($O(2)$: circle, $D_6$: hexagon, $D_4$: square, $D_3$: equilateral\_triangle, $D_2$: ellipse) plus a trivial-group baseline random\_noise, covering the partial orders $O(2) \supset D_4 \supset D_2$ and $O(2) \supset D_6 \supset D_3$. Each condition was independently trained using five seeds. Translation-symmetric shapes ($p1m$; stripe) and within-group second representatives, totaling 10 additional shapes, are discussed in Appendices ~\ref{app:translation} and ~\ref{app:grid} (Figure~\ref{fig:shape_gallery_appendix}).

\begin{figure}[t]
\centering
\includegraphics[width=\linewidth]{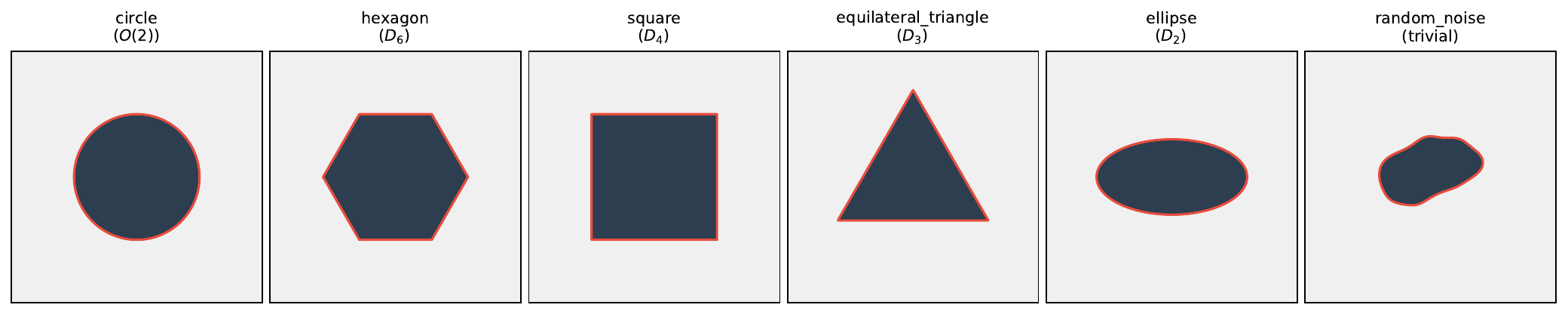}
\caption{Zero level sets of the SDFs for the 6 shapes used in the main text (red: boundary; dark: interior $f_* < 0$).}
\label{fig:shape_gallery}
\end{figure}

\paragraph{Positional encodings.}
The main comparisons use the three structured PEs: \textbf{DyadicAxisPE} ($K=12$, $4K=48$ dims; exact-liftability bound $D_4$), \textbf{TriAxisPE} ($K=8$, $6K=48$ dims; lifts $D_6$/$D_3$ but not $D_4$), and \textbf{RFF} ($n=24$, $2n=48$ dims; exact-liftability bound $Z_2$). All three were trained independently under the same MLP architecture and training setup and shared the same 48-dimensional representation, enabling a dimension-controlled comparison of the encoding structure effects. The RFF choice $n=24$ lies safely within the reconstruction saturation regime ($n \ge 4$).

\paragraph{Symmetry score.}
Using the symmetry score $S(g;\,\theta, \Phi)$ and threshold $\varepsilon$. We report hard detection, \(S<\varepsilon\), only for false-positive analysis; the main figures analyze raw and relative score profiles. Varying $\varepsilon$ from $0.02$ to $0.10$ preserves the primary detection/non-detection patterns (sensitivity analysis in Appendix~\ref{app:dynamics}). At the PE level, Procrustes residuals reaching machine precision ($\sim 10^{-14}$) confirm the existence of exact lifts, whereas the observable-level $S(g)$ attained depends on $\Phi$. In what follows, $S(g)$ for the case where $\Phi$ is a Gram matrix is abbreviated as the Gram distance $\Delta$, with the observable specified (e.g., prefix Gram $L_0$ $\Delta$, $W_\mathrm{eff}$ Gram $\Delta$).

We note that no false positives ($D_\mathrm{op}$ exceeding $G_\mathrm{true}$) were observed under the conditions of this work; this is empirically verified in Appendix~\ref{app:dynamics}. Below, we verify the exact liftability bound  as our primary empirical result.

\subsection{PE Exact Liftability and Its Relation to Weight-Level Readout Responses}

The true symmetry $G_\mathrm{true}$ does not appear directly in the weights; it is first restricted to the PE-representable $G_\mathrm{lift}^{\mathrm{exact}}$, and then the observable-dependent $G_\mathrm{obs}$ is obtained. We fix the observable to prefix Gram $L_0$ and examine how the PE's exact liftability constrains this upper bound. prefix Gram $L_0$ is the shallowest layer of the weight-prefix Gram $G_l = P_l^\top P_l$ ($P_l = W_l \cdots W_0$); it contains only the PE's linear transformation and therefore most directly preserves the symmetric structure of PE space (chosen for mechanistic clarity; systematic comparison with $L_0$--$L_4$ and $W_\mathrm{eff}$ Gram, and with other observable choices, is deferred to Appendix~\ref{app:prefix_layers},~\ref{app:grid}).

\subsubsection{PE-Level Exact Lift and Its Consequences}

\begin{figure}[t]
\centering
\includegraphics[width=\linewidth]{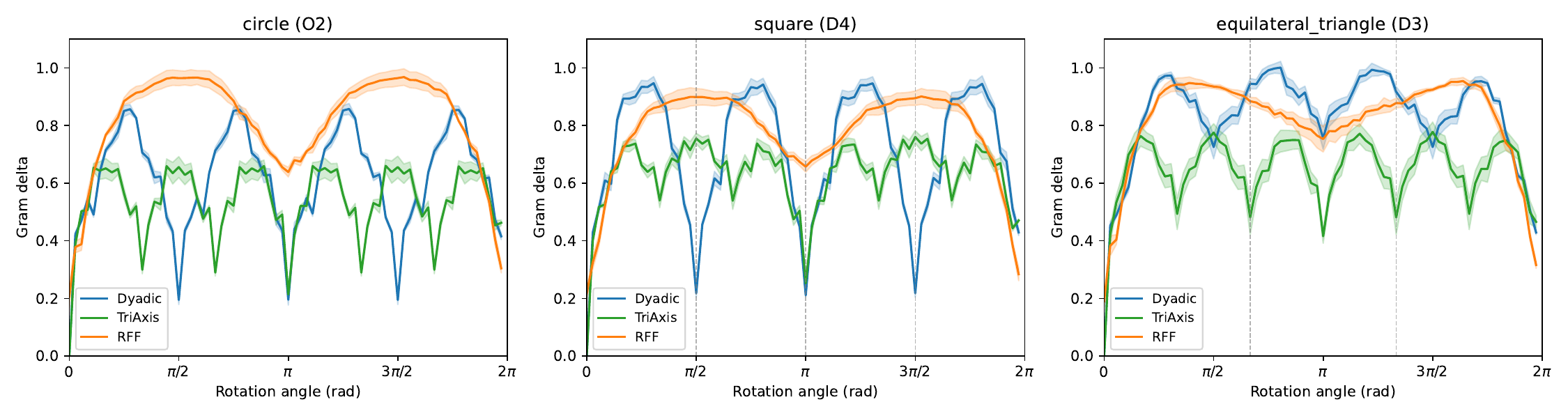}
\caption{Prefix Gram $L_0$ score $\Delta$ over rotation angles for 3 shapes $\times$ 3 PEs. DyadicAxisPE shows $D_4$-aligned dips, TriAxis gives lower scores on the $D_3$ shape, and RFF mainly shows a $\pi$ dip. Shading: $\pm1$ std over 5 seeds. See Figure~\ref{fig:observability_map_main} and Appendix~\ref{app:prefix_layers} for extended comparisons.}
\label{fig:rotation_curves}
\end{figure}

Table~\ref{tab:procrustes} reports the Procrustes residuals $r_P(g)$ for the three PEs. Within the exact-liftable ranges of Lemma~\ref{lem:exact-lift} (DyadicAxisPE's $D_4$, TriAxisPE's $D_6 \supset D_3$, RFF's $Z_2$), $r_P$ reaches near-zero numerical residuals provide evidence for exact lifts, whereas outside these ranges, $r_P \sim 10^{-1}$--$10^{0}$, orders of magnitude larger, consistent with the theoretical liftability bounds. The translation residuals are reported in Appendix~\ref{app:translation}.

\begin{table}[t]
\centering
\caption{PE-level Procrustes residuals $r_P(g)$. Values are averaged over five input samples for deterministic PEs and over 25 trials for RFF. Near-zero residuals indicate exact lifts; large residuals indicate non-liftability.}
\label{tab:procrustes}
\begin{tabular}{lccc}
\hline
Transform & DyadicAxisPE & TriAxis & RFF \\
\hline
$\pi/2$ rotation & $< 10^{-14}$ & $1.2 \times 10^{0}$ & $2.0 \times 10^{-1}$ \\
$\pi/3$ rotation & $1.2 \times 10^{0}$ & $4.6 \times 10^{-6}$ & $1.9 \times 10^{-1}$ \\
$2\pi/3$ rotation & $1.2 \times 10^{0}$ & $4.6 \times 10^{-6}$ & $1.8 \times 10^{-1}$ \\
$\pi$ rotation & $< 10^{-14}$ & $< 10^{-14}$ & $7.3 \times 10^{-8}$ \\
$x$-reflection & $< 10^{-14}$ & $< 10^{-14}$ & $1.4 \times 10^{-1}$ \\
\hline
\end{tabular}
\end{table}

Figure~\ref{fig:rotation_curves} shows the angular response of prefix Gram $L_0$ $\Delta$ (distinct from the Procrustes residual, it measures invariance of the \emph{trained} shallowest-layer Gram). DyadicAxisPE and TriAxisPE exhibit complementary dominance on $D_4$ shapes and $D_3$ shapes, respectively, while RFF shows a dip at $\pi$ but lacks periodic structure. The pattern is consistent with our basic position that ``the PE's exact-liftable group type fixes the visible structure, and shape modulates its strength.'' A six-group systematic comparison, including shapes such as hexagons ($D_6$), where both PEs compete via the common subgroup $D_4 \cap D_6 = D_2$, is shown in Figure~\ref{fig:observability_map_main}. Under the tested prefix Gram $L_0$, no systematic valley formation was observed outside the exact liftability bound.

\subsubsection{\texorpdfstring{Suppression of $D_3$ Rotation Responses Under DyadicAxisPE}{Suppression of D3 Rotation Responses Under DyadicAxisPE}}

\begin{figure}[t]
\centering
\begin{subfigure}[t]{0.49\linewidth}
\centering
\includegraphics[width=\linewidth]{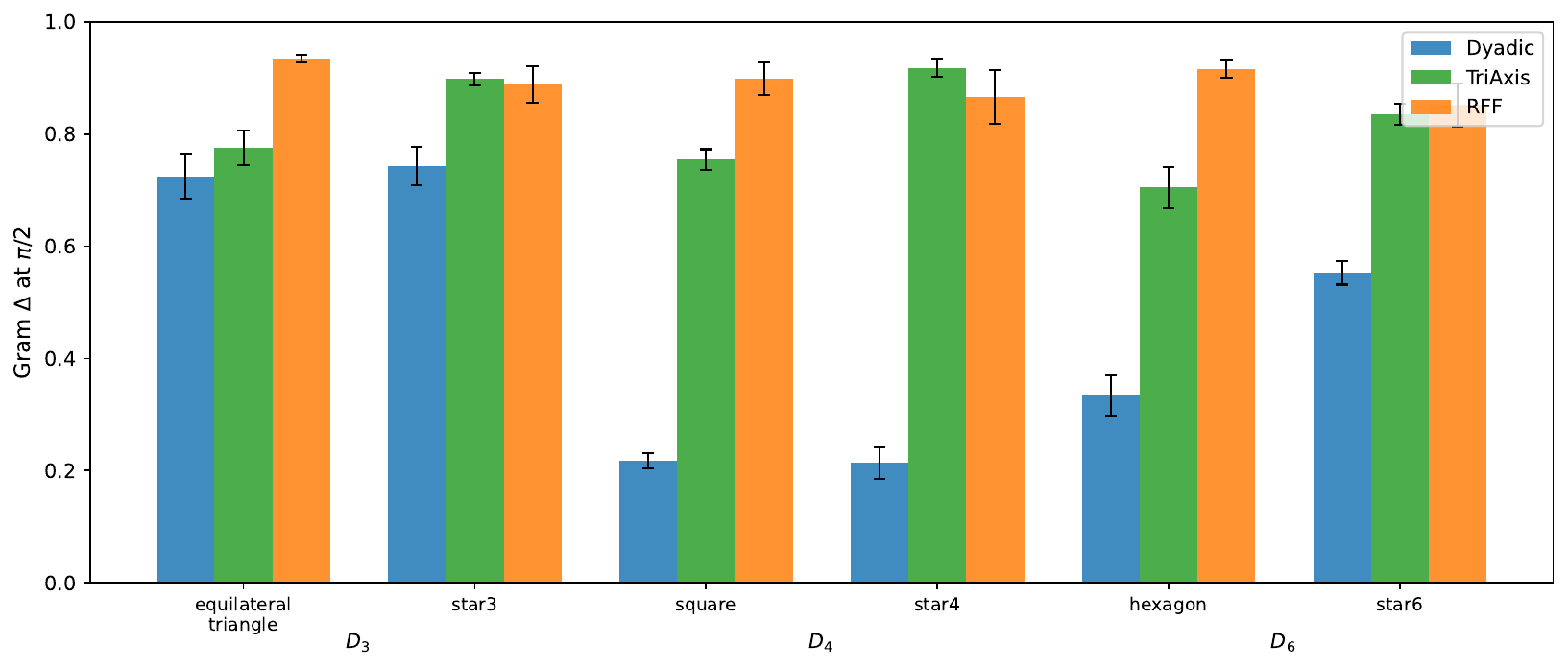}
\caption{$\pi/2$ rotation (relative-response anchor)}
\label{fig:d4_nondetect}
\end{subfigure}
\hfill
\begin{subfigure}[t]{0.49\linewidth}
\centering
\includegraphics[width=\linewidth]{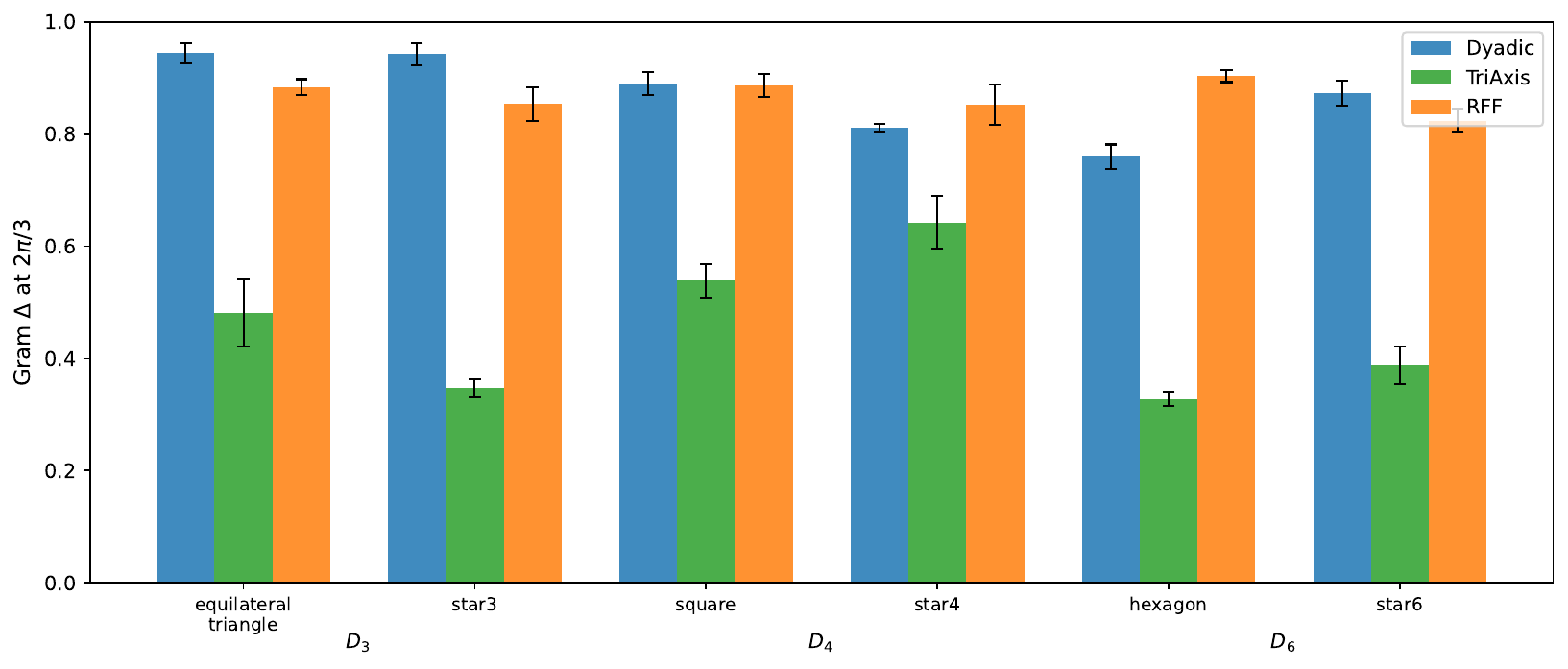}
\caption{$2\pi/3$ rotation (structural limitation)}
\label{fig:d3_nondetect}
\end{subfigure}
\caption{Prefix Gram $L_0$ score $\Delta$ for 3 PEs $\times$ 6 shapes. DyadicAxisPE gives the lowest $\pi/2$ scores on $D_4$ shapes, whereas TriAxis gives the lowest $2\pi/3$ scores on $D_3$ shapes, though still above $\varepsilon=0.05$. Results are mean $\pm1\sigma$ over 5 seeds.}
\label{fig:nondetect_pair}
\end{figure}

Below, we examine a positive example (DyadicAxisPE's $\pi/2$) and a structural limitation ($D_3$ non-detection) in contrast. Since this subsection addresses group-level detection for $D_3$/$D_4$/$D_6$, Figure~\ref{fig:nondetect_pair} uses two shapes per group (a main-text representative plus an Appendix shape; $D_3$: equilateral\_triangle + star3, $D_4$: square + star4, $D_6$: hexagon + star6) to confirm within-group reproducibility. Among the six main-text shapes, $O(2)$ (circle), $D_2$ (ellipse), and the trivial group (random\_noise) are outside the scope of this subsection and are deferred to \S4.2.1 and the Appendix. Both panels share the same y-axis for a direct comparison.

Positive example (Figure~\ref{fig:d4_nondetect}): only $D_4$ shapes drop substantially under the DyadicAxisPE, consistent with $\pi/2 \in D_4$; TriAxis and RFF remain high across all shapes (neither lifts $\pi/2$). Structural limitation (Figure~\ref{fig:d3_nondetect}): under DyadicAxisPE, $D_3$ shapes score systematically higher than $D_4$/$D_6$ shapes, reflecting that $\pi/3$ and $2\pi/3$ are contained in neither DyadicAxisPE's $D_4$ bound nor RFF's $Z_2$ bound. In contrast, the TriAxis panel shows a substantial drop in $D_3$ shapes (equilateral\_triangle: $0.48$, star3: $0.35$), yielding a $D_3$-sensitive relative response under the tested Gram readout via a PE that exactly lifts $D_3$. An analogous complementary pattern holds for $D_6$ shapes (hexagon, star6): at $\pi/3$, TriAxisPE drops to $0.33$--$0.39$ while DyadicAxisPE remains at $0.71$--$0.81$ (six-group comparison: Figure~\ref{fig:observability_map_main}).

These results show that the exact liftability of PE constrains the bound of observable symmetry consistently across the three PEs. Within-group second representatives and the extension to 6 symmetry groups, the null-PE limit, and the complementary PE-over-observable dominance observation are documented in Appendix~\ref{app:prefix_layers},~\ref{app:identity_pe},
and~\ref{app:grid} respectively.

\section{Conclusion}
\label{sec:conclusion}

In the tested PE-MLP setting, trained neural network weights do not expose
geometric symmetry as an absolute property of the target function, but through
an observable symmetry set determined by the positional encoding, the observable,
and training. This work formalized this dependence via the two-factor
$(\phi,\Phi)$ structure of $G_\mathrm{obs}$ and showed that the PE's exact
liftability imposes a structural upper bound on the exact observable symmetry. The
same principle explains both positive and negative cases: transformations inside
the PE's exact-liftable group tend to yield lower operational scores under the tested readouts
($D_4$-aligned responses under DyadicAxisPE, $D_3$ / $D_6$-aligned lower scores under TriAxisPE, and a $\pi$-rotation response under RFF).
However, non-liftable transformations are structurally suppressed.

These results suggest that PE selection should be viewed not only as a choice
affecting approximation quality, but also as a design parameter for post-hoc
symmetry readout. DyadicAxisPE is suitable for lattice-like symmetries up to
$D_4$, while TriAxisPE shifts the exact-liftability bound to $D_6$ by replacing
coordinate axes with three $120^{\circ}$-separated axes, thereby producing lower $D_3$ scores under the tested operational readout, which is suppressed under the DyadicAxisPE.
At the same time, exact liftability is necessary but not sufficient: training
residuals, observable choice, and feature-space redundancy also affect whether
symmetry becomes visible at the weight level.

The broader message is that not only what has been learned, but also
what is in principle observable, is constrained by the combination of
representation and readout observable. This ``limits of observability''
perspective may extend beyond symmetry detection to other attempts to infer
internal structure from the trained model weights.

\paragraph{Limitations and future directions.}
This work is limited to 2D SDF MLPs, and extending the framework to 3D data or
other architectures such as CNNs and Transformers requires identifying analogous
structural constraints. The automatic choice of optimal observables and a theory
of approximate discriminative power outside the exact-liftability regime remain
open. Future applications include PE design guided by representation-theoretic
classification, post-hoc selection of equivariant architectures, and quality
verification of the trained neural fields.

\bibliographystyle{plainnat}
\bibliography{references}

\clearpage

\appendix

\setcounter{figure}{0}
\renewcommand{\thefigure}{A\arabic{figure}}

\setcounter{table}{0}
\renewcommand{\thetable}{A\arabic{table}}

\section{Proofs of Lemma~\ref{lem:exact-lift} and Proposition~\ref{prop:hierarchy}}
\label{app:proofs}

\begin{proof}[Proof of Lemma~\ref{lem:exact-lift} (Exact liftability of structured PEs)]
We verify each claim in this section. For each PE, we construct (or show non-existence of) a linear map $\rho(g)$ such that $\phi(g\,x) = \rho(g)\,\phi(x)$.

\textbf{(i) DyadicAxisPE: $D_4$ exact lift.}
Each component of the DyadicAxisPE is independent per coordinate with frequencies $\omega_k = 2^k\pi$ ($k=0,\dots,K{-}1$).
\emph{$\pi/2$ rotation.} $R_{\pi/2}$ acts as $(x_1, x_2) \mapsto (-x_2, x_1)$. For each frequency $\omega_k$, $\sin(\omega_k (-x_2)) = -\sin(\omega_k x_2)$ and $\cos(\omega_k (-x_2)) = \cos(\omega_k x_2)$, so $R_{\pi/2}$ is represented as a permutation-and-sign matrix $P_{\pi/2}^{(k)}$ that swaps the $x_1$ and $x_2$ components with appropriate sign adjustments at each $k$. Overall, $\phi(R_{\pi/2}\,x) = P_{\pi/2}\,\phi(x)$. Because $R_{\pi/2}^2 = R_{\pi}$ and $R_{\pi/2}^4 = \mathrm{id}$ are constructed analogously, the entire rotation subgroup $\langle R_{\pi/2} \rangle$ of $D_4$ is exactly lifted.
\emph{Axis reflections.} The reflection $\sigma_{x_1}: (x_1,x_2) \mapsto (x_1, -x_2)$ acts as a diagonal sign matrix $D_{\sigma}$ flipping only $\sin(\omega_k x_2)$ (by the oddness of $\sin$ and evenness of $\cos$). The reflection $\sigma_{x_2}$ is analogous to this. Since $\langle R_{\pi/2}, \sigma_{x_1} \rangle = D_4$, the full $D_4$ group is exactly lifted.

\textbf{(i') DyadicAxisPE: non-liftability of $\pi/3$ and $2\pi/3$.}
The $\pi/3$ rotation $R_{\pi/3}$ acts as $(x_1, x_2) \mapsto (\tfrac{1}{2}x_1 - \tfrac{\sqrt{3}}{2}x_2,\, \tfrac{\sqrt{3}}{2}x_1 + \tfrac{1}{2}x_2)$. The components of $\phi(R_{\pi/3}\,x)$ include terms such as
\[
\sin\!\bigl(2^k\pi(\tfrac{1}{2}x_1 - \tfrac{\sqrt{3}}{2}x_2)\bigr) = \sin(2^k\pi \cdot \tfrac{1}{2}\, x_1)\cos(2^k\pi \cdot \tfrac{\sqrt{3}}{2}\, x_2) - \cos(2^k\pi \cdot \tfrac{1}{2}\, x_1)\sin(2^k\pi \cdot \tfrac{\sqrt{3}}{2}\, x_2).
\]
Every component of DyadicAxisPE is a univariate function of either $x_1$ or $x_2$ alone, so any linear combination takes the additively separable form $f(x_1) + g(x_2)$. The term above is a product of functions of $x_1$ and $x_2$ and hence is not additively separable; indeed, $\frac{\partial^2}{\partial x_1\,\partial x_2}\bigl[\sin(2^k\pi(\frac{1}{2}x_1 - \frac{\sqrt{3}}{2}x_2))\bigr] \neq 0$, whereas $\frac{\partial^2}{\partial x_1\,\partial x_2}[f(x_1)+g(x_2)] = 0$ identically. Therefore, no linear map $\rho(R_{\pi/3})$ satisfying $\phi(R_{\pi/3}\, x) = \rho(R_{\pi/3})\,\phi(x)$ exists. The same argument applies to $R_{2\pi/3} = R_{\pi/3}^2$.

\textbf{(ii) TriAxisPE: $D_6$ exact lift.}
The $2\pi/3$ rotation $R_{2\pi/3}$ satisfies $v_j^\top(R_{2\pi/3}\,x) = v_{j-1}^\top x$ (indices mod 3), and thus acts as a cyclic permutation of the three directional blocks. The $\pi/3$ rotation maps $v_j \mapsto -v_{j+1}$, which is realized as a cyclic permutation composed of sign flips on the sine components. The $x_1$-axis reflection maps $v_0 \mapsto v_0$ and $v_1 \leftrightarrow v_2$, again a linear action on the vertices. Since $\langle R_{\pi/3}, \sigma_{x_1} \rangle = D_6 \supset D_3$, the full $D_6$ symmetry is exactly lifted.

\textbf{(iii) RFF: $Z_2$ exact lift and non-liftability of general angles.}
Write $\theta_i := \omega_i^\top x + b_i$.
\emph{$\pi$ rotation.} For $R_{\pi}\, x = -x$, we have $\theta_i' := \omega_i^\top(-x) + b_i = -\theta_i + 2b_i$. By the addition formula,
\[
\begin{pmatrix} \sin(\theta_i') \\ \cos(\theta_i') \end{pmatrix}
= \begin{pmatrix} -\cos(2b_i) & \sin(2b_i) \\ \phantom{-}\sin(2b_i) & \cos(2b_i) \end{pmatrix}
\begin{pmatrix} \sin(\theta_i) \\ \cos(\theta_i) \end{pmatrix}
\]
holds for each $i$. Therefore, $\phi(R_{\pi}\, x) = \rho(R_{\pi})\,\phi(x)$ with the block diagonal matrix $\rho(R_{\pi}) = \mathrm{diag}(M_1, \dots, M_n)$.
\emph{Non-liftability of general angles.} For a rotation $R_\alpha$ with $\alpha \neq 0, \pi$, we have $\omega_i^\top R_\alpha\, x = (R_\alpha^\top \omega_i)^\top x$. Under the general position assumption, $R_\alpha^\top \omega_i \notin \{\pm\omega_j \mid j = 1,\dots,n\}$ for all $i$. Consequently, the components of $\phi(R_\alpha\, x)$ involve frequency directions that are absent from those of $\phi(x)$. Because plane waves $e^{i\omega^\top x}$ and $e^{i\omega'^\top x}$ with $\omega \neq \pm\omega'$ are linearly independent in $L^2(\mathbb{R}^2)$, $\phi(R_\alpha\, x)$ cannot be expressed as a linear combination of the components of $\phi(x)$.
\end{proof}

\begin{proof}[Proof of Proposition~\ref{prop:hierarchy} (Exact Observability Hierarchy)]
We show the two types of containment separately.

\textbf{$G_\mathrm{obs}^{\mathrm{exact}} \subseteq G_\mathrm{lift}^{\mathrm{exact}}$.}
By definition, the elements of $G_\mathrm{obs}^{\mathrm{exact}}$ are restricted to $G_\mathrm{lift}^{\mathrm{exact}}$. Hence, containment follows immediately.

\textbf{$G_\mathrm{obs}^{\mathrm{exact}} \subseteq G_\mathrm{true}$.}
Let $g \in G_\mathrm{obs}^{\mathrm{exact}}$. Then, $\Phi(T_g\,\theta) = \Phi(\theta)$. Since $\Phi$ is symmetry-sufficient (Definition~\ref{def:sym-sufficient}),
\[
f_\theta(g^{-1}\,x) = f_\theta(x) \quad \text{for all } x \in X.
\]
By the convergence assumption $\sup_{x \in X}|f_\theta(x) - f_*(x)| \leq \delta$, the triangle inequality gives for all $x$,
\[
|f_*(g^{-1}\,x) - f_*(x)| \leq |f_*(g^{-1}\,x) - f_\theta(g^{-1}\,x)| + |f_\theta(g^{-1}\,x) - f_\theta(x)| + |f_\theta(x) - f_*(x)| \leq 2\delta.
\]
Because $f_\theta(g^{-1}\,x) = f_\theta(x)$, the middle term vanishes. Under sufficient convergence ($\delta \to 0$), $f_*(g^{-1}\,x) = f_*(x)$ holds for all $x$, and hence $g \in G_\mathrm{true}$.

Combining both containments, $G_\mathrm{obs}^{\mathrm{exact}} \subseteq G_\mathrm{lift}^{\mathrm{exact}} \cap G_\mathrm{true}$.
\end{proof}

\section{Functional vs Structural Symmetry: Contrast with Dense Forward}
\label{app:dense_forward}

This appendix elaborates on how the symmetry readout via the observable $\Phi$ differs essentially from a naive dense-forward check $f_\theta(g^{-1}x) \approx f_\theta(x)$.

\paragraph{The space on which the group action acts.}
The essential distinction between dense forward and observables in this framework lies in \textbf{the space in which the group action is performed}. Dense forward uses the input-space action $g : x \mapsto gx$ directly, comparing $f_\theta(g^{-1}x)$ and $f_\theta(x)$ at many sample points, whereas the observable $\Phi$ is defined as the response to a structural transformation $T_g\theta$ mediated by the feature-space lift $\rho(g)$. For $g \in G_\mathrm{lift}^{\mathrm{exact}}$, the two coincide via $\phi(gx) = \rho(g)\phi(x)$; outside this set, they diverge that dense forward uses the actual $\phi(g^{-1}x)$ directly, whereas the observable handles $g$ only through its feature-space lift.

\paragraph{Functional vs structural symmetry.}
This distinction arises because PE plays a dual role in both training and observation: on the one hand, during training, PE serves as the feature map $\phi$ that determines how the MLP sees inputs; on the other hand, at observation time, it provides the structural representation $\rho$ of the group action. Because MLPs are universal approximators, sufficient training can realize function-level symmetry $f_\theta(g^{-1}x) \approx f_\theta(x)$ independently of PE liftability (\textbf{functional symmetry}). Invariance of observables via $\rho$ (\textbf{structural symmetry}), in contrast, is constrained by the PE's algebraic structure. Consequently, situations can arise in which a model is functionally symmetric yet structurally undetectable, a constraint intrinsic to structural readout that is unavoidable once $\rho$ is used, and precisely the constraint this work characterizes. The main text $G_\mathrm{obs}^{\mathrm{exact}}$ and hierarchy are described at the structural rather than functional level of symmetry. A concrete instance of this gap is presented for translational symmetry in Appendix~\ref{app:translation}.

\section{Translation Symmetry Readout}
\label{app:translation}

The main text focuses on rotational and reflective symmetries, and translational symmetry fits naturally into the same framework. This appendix (i) consolidates the exact lift of translation for all three PEs as a single lemma, (ii) reports PE-level verification via Procrustes residuals , and (iii) demonstrates translation detection using gram observables on trained weights.

\subsection{Exact Lift of Translation under the PEs}
\label{app:trans_lemma}

Lemma~\ref{lem:exact-lift} addressed rotations and reflections. Translation admits a uniform statement across all three PEs, which we formulate separately.

\begin{lemma}[Exact lift of translation]
\label{lem:translation-lift}
For each of DyadicAxisPE, TriAxisPE, and RFF, there exists a PE-dependent block-diagonal linear map $M(t)$ such that the translation $x \mapsto x + t$ is exactly lifted as
\[
\phi(x + t) = M(t)\,\phi(x).
\]
\end{lemma}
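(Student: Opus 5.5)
The plan is to reduce all three PEs to a single observation. Each PE is a concatenation of sine/cosine pairs of the form $(\sin(\lambda^\top x + c), \cos(\lambda^\top x + c))$ for a fixed frequency vector $\lambda \in \mathbb{R}^2$ and phase $c \in \mathbb{R}$:
\begin{itemize}
\item for DyadicAxisPE, $\lambda = 2^k\pi e_j$ ($j=1,2$) and $c=0$;
\item for TriAxisPE, $\lambda = 2^k\pi v_j$ ($j=0,1,2$) and $c=0$;
\item for RFF, $\lambda = \omega_i$ and $c = b_i$.
\end{itemize}
So it suffices to show that one such pair transforms under $x \mapsto x+t$ by a $2\times 2$ matrix depending only on $t$ (and on $\lambda$). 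Stacking these blocks, in the same ordering used to define $\phi$, then gives the block-diagonal $M(t)$.

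For a single pair, write $\theta = \lambda^\top x + c$, so translation sends $\theta \mapsto \theta + \tau$ with $\tau := \lambda^\top t$. The addition formulas give
\[
\begin{pmatrix} \sin(\theta+\tau) \\ \cos(\theta+\tau) \end{pmatrix}
= \begin{pmatrix} \cos\tau & \sin\tau \\ -\sin\tau & \cos\tau \end{pmatrix}
\begin{pmatrix} \sin\theta \\ \cos\theta \end{pmatrix}.
\]
The block $B_\lambda(t)$ on the right is a rotation matrix independent of $x$. This is the same computation already used for the $\pi$-rotation of RFF in the proof of Lemma~\ref{lem:exact-lift}. Setting $M(t) = \mathrm{diag}(B_{\lambda_1}(t), \dots)$ over all frequency/axis blocks then yields $\phi(x+t) = M(t)\phi(x)$ exactly, for every $t \in \mathbb{R}^2$ and every $x$.

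I will also record two side facts. First, $M(t)$ is orthogonal, which is consistent with the Procrustes-based verification. Second, $M(s+t) = M(s)M(t)$ and $M(0)=I$, so $M$ is a genuine representation of the translation group.

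There is no real obstacle here; the only step needing care is bookkeeping. In particular, the ordering of components in $\phi_\mathrm{Dyadic}$ and $\phi_\mathrm{tri}$ must be grouped into consecutive (sin, cos) pairs sharing the same frequency, so that $M(t)$ is literally block-diagonal rather than block-diagonal only up to a permutation. I will handle this by noting that the definitions already list $\sin$ and $\cos$ of the same argument adjacently, for each axis and octave.

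Unlike Lemma~\ref{lem:exact-lift}, no general-position assumption on RFF is needed. Translation preserves every frequency vector and changes only phases, so the addition-formula argument applies uniformly to all three PEs.
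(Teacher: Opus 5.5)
Your proof is correct and is essentially the paper's argument. The paper applies the same addition-formula identity to each adjacent $(\sin,\cos)$ pair of DyadicAxisPE, TriAxisPE, and RFF in turn, obtaining the identical rotation blocks $\bigl(\begin{smallmatrix}\cos\tau & \sin\tau\\ -\sin\tau & \cos\tau\end{smallmatrix}\bigr)$; you handle the three cases at once through a generic frequency $\lambda$ and phase $c$, and one aside needs correcting: the RFF $\pi$-rotation lift in Lemma~\ref{lem:exact-lift} uses $\theta_i \mapsto -\theta_i + 2b_i$ and yields a reflection-type block with determinant $-1$, so it is related to, but not literally the same as, the translation computation.
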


\begin{proof}
\textbf{DyadicAxisPE.} By the addition formula, for each frequency $\omega_k = 2^k\pi$ and coordinate $x_i$,
\[
\begin{pmatrix} \sin(\omega_k(x_i + t_i)) \\ \cos(\omega_k(x_i + t_i)) \end{pmatrix}
= \begin{pmatrix} \cos(\omega_k t_i) & \sin(\omega_k t_i) \\ -\sin(\omega_k t_i) & \cos(\omega_k t_i) \end{pmatrix}
\begin{pmatrix} \sin(\omega_k x_i) \\ \cos(\omega_k x_i) \end{pmatrix}.
\]
Therefore, $\phi_\mathrm{Dyadic}(x+t) = M(t)\,\phi_\mathrm{Dyadic}(x)$ with the block-diagonal matrix $M(t) = \mathrm{diag}_{k=0}^{K-1}(R_{\omega_0 t_1}, R_{\omega_0 t_2}, \dots, R_{\omega_{K-1} t_2})$, where each block is a $2 \times 2$ rotation matrix.

\textbf{TriAxisPE.} Apply the same addition formula argument to each directional projection $u_j := v_j^\top x$ ($j=0,1,2$). The shift $v_j^\top t$ in $u_j$ acts as a $2\times 2$ rotation at each $j$ and frequency $\omega_k = 2^k\pi$, so $\phi_\mathrm{tri}(x+t) = M(t)\,\phi_\mathrm{tri}(x)$ holds with a block diagonal $M(t)$.

\textbf{RFF.} With $\theta_i := \omega_i^\top x + b_i$, we have $\theta_i' := \omega_i^\top (x+t) + b_i = \theta_i + \omega_i^\top t$. The addition formula gives
\[
\begin{pmatrix} \sin(\theta_i') \\ \cos(\theta_i') \end{pmatrix}
= \begin{pmatrix} \cos(\omega_i^\top t) & \sin(\omega_i^\top t) \\ -\sin(\omega_i^\top t) & \cos(\omega_i^\top t) \end{pmatrix}
\begin{pmatrix} \sin(\theta_i) \\ \cos(\theta_i) \end{pmatrix}
\]
for each $i$. Therefore, $\phi_\mathrm{RFF}(x+t) = M(t)\,\phi_\mathrm{RFF}(x)$ with $M(t) = \mathrm{diag}(R_{\omega_1^\top t}, \dots, R_{\omega_n^\top t})$.
\end{proof}

\subsection{PE-level Procrustes residuals}
\label{app:trans_procrustes}

Complementing the Procrustes residuals (Table~\ref{tab:procrustes}), the residuals for translation $(0.5, 0)$ are DyadicAxisPE $5.4 \times 10^{-5}$, TriAxisPE $5.4 \times 10^{-6}$, and RFF $1.5 \times 10^{-7}$. All three PEs yield $r_P \leq 10^{-5}$, consistent with the exact lift established in Lemma~\ref{lem:translation-lift}: translation is exactly representable at the PE level for all three PEs.

\subsection{Detection of translational symmetry via prefix Gram $L_0$}
\label{app:trans_detect}

\begin{figure}[t]
\centering
\includegraphics[width=\linewidth]{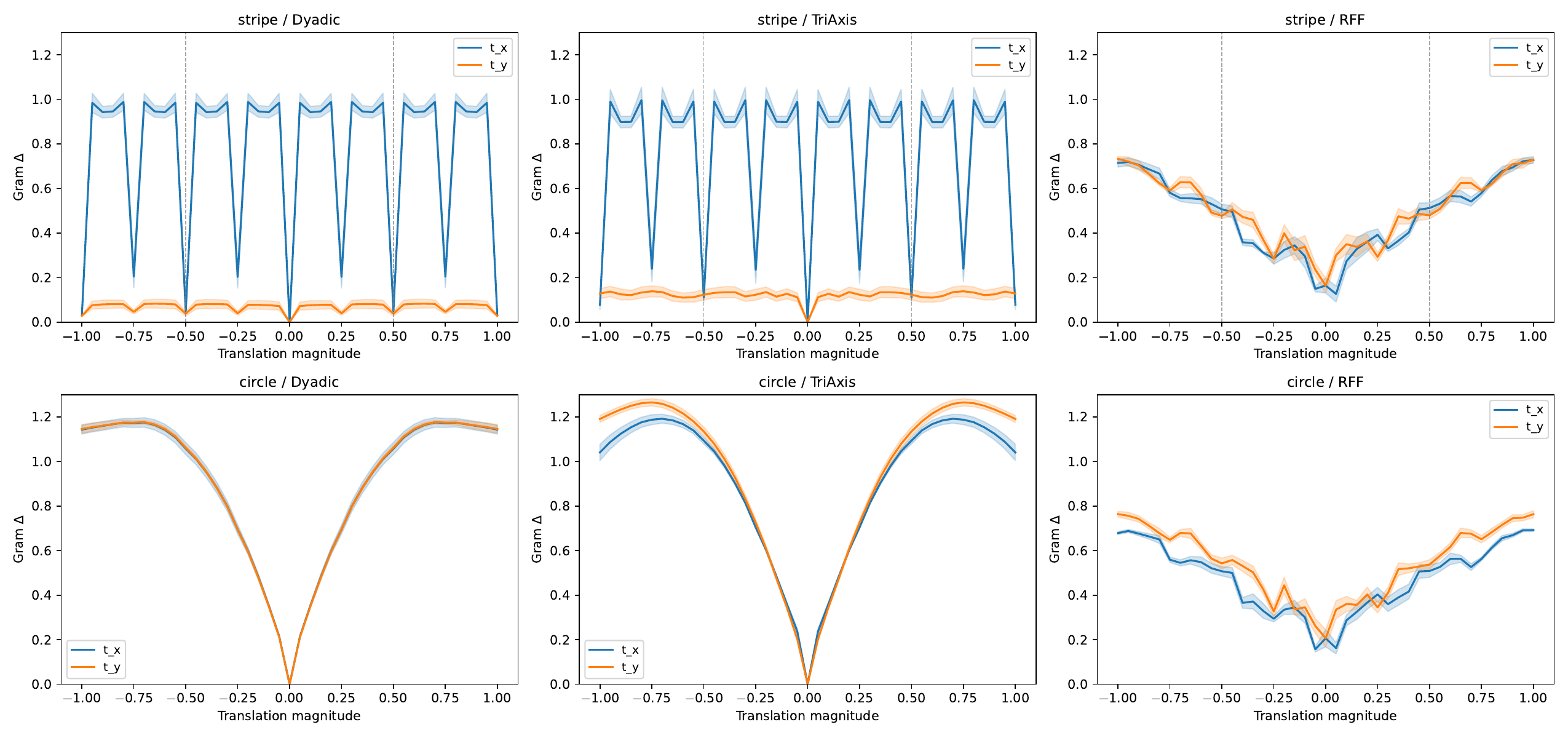}
\caption{Detection of translational symmetry: translation landscape of prefix Gram $L_0$ $\Delta$ (DyadicAxisPE: left column, TriAxis: middle column, RFF: right column). The gray dashed vertical lines indicate the true translational periods. Stripe (top row): DyadicAxisPE and TriAxis exhibit periodic sharp dips, whereas RFF shows only a broad valley at $t=0$. Circle (bottom row, translationally asymmetric): a $t=0$ valley across all three PEs. All three PEs exactly lift translation (Lemma~\ref{lem:translation-lift}), but the sufficient condition $M(t) = I$ for $G_0(T_t\theta) = M(t)^\top G_0 M(t)$ to equal $G_0$ (i.e., $\omega_i^\top t \equiv 0 \pmod{2\pi}$ simultaneously for every $\omega_i$) is met at integer multiples of the stripe period for the discrete dyadic frequencies (DyadicAxisPE/TriAxis) but fails with probability zero at $t \neq 0$ for the random frequencies (RFF); see the derivation below. A full-shape, full-layer comparison including the 2D lattice pattern (checker) is provided in Figure~\ref{fig:translation_all}.}
\label{fig:translation}
\end{figure}

Using prefix Gram $L_0$ constructed from the trained weights as the observable, Figure~\ref{fig:translation} compares the translation responses for the stripe ($p1m$) and circle (translationally asymmetric).

By Lemma~\ref{lem:translation-lift}, translation is exactly lifted for all three PEs ($\phi(x+t) = M(t)\,\phi(x)$ with block-diagonal $M(t)$ of $2\times 2$ rotation blocks), and for prefix Gram $L_0$ $G_0 = W_0^\top W_0$ we have
\[
G_0(T_t\theta) = M(t)^\top\, G_0\, M(t)
\]
exactly. A sufficient condition for $\Delta \approx 0$ (a dip) at a given $t$ is $M(t) = I$, i.e., $\omega_i^\top t \equiv 0 \pmod{2\pi}$ simultaneously for every PE frequency $\omega_i$ (each block $R_{\omega_i^\top t}$ of $M(t)$ reduces to the identity). Geometrically, each condition defines a family of parallel lines in the $t$-plane with normal $\omega_i$, $L_i = \{t : \omega_i^\top t \in 2\pi\mathbb{Z}\}$, and $M(t) = I$ requires $t \in \bigcap_i L_i$. In the 2-dimensional $t$-plane, the behavior is determined by the \textbf{number of independent directions} among $\{\omega_i\}$:
\begin{itemize}
\item \textbf{DyadicAxisPE} ($\omega_k = 2^k\pi$ along two coordinate axes) and \textbf{TriAxisPE} (the same dyadic frequencies along three axes separated by $2\pi/3$): frequencies sharing a direction are mutually dependent, so the $2K$ conditions (respectively $3K$) effectively reduce to two independent directions. Two independent constraints on 2D $t$ leave a non-trivial 2D lattice as the common solution set, which meets integer multiples of the stripe period $\Rightarrow$ \textbf{periodic dips}.
\item \textbf{RFF} ($\omega_i$ are generic-position random 2D frequencies): the $n=24$ frequencies are generically non-commensurate; more than two independent constraints overdetermine the two-dimensional translation variable, so the 24 conditions form an over-determined system on 2D $t$ (three or more independent constraints on a 2D space generically leave only $\{0\}$ as the common solution). The probability of simultaneous satisfaction at any $t \neq 0$ is zero $\Rightarrow$ \textbf{no dip outside $t=0$}.
\end{itemize}
Thus, although all three PEs exactly lift translation, periodic dips in prefix Gram $L_0$ appear only when the PE frequency set is discrete and compatible with the translational period of the shape. This is a concrete instance of the functional--structural gap discussed in Appendix~\ref{app:dense_forward}: the stripe is functionally represented by the RFF-MLP, but the structural invariance of $G_0$ under the sandwich action requires a discrete solution of $M(t) = I$, which is absent (except at $t=0$) for random-frequency PEs. For the translationally asymmetric circle, all three PEs show a valley only at $t=0$.

Note that while translation is exactly lifted at the PE representation-theory level, the estimator used in our implementation (Procrustes approximation) leaves residuals of $0.01$--$0.15$.\footnote{This residual is inherent to the Procrustes estimator; analytic estimators exploiting the block-diagonal structure of translation lifts could reduce it, which we leave to future work.}
The PE-level residuals in C.2 are \(r_P\le10^{-5}\). The larger 0.01--0.15 values refer to downstream operational Gram-score discrepancies under the estimator, not to PE liftability.
This distinction that representation-level exactness versus estimator-level approximation is important for clarifying the consistency between theory and experiments.

\subsection{Full-layer comparison}
\label{app:trans_all_layers}

The layer dependence of the translation detection is shown in Figure~\ref{fig:translation_all}.

\begin{figure}[htbp]
\centering
\includegraphics[width=\linewidth]{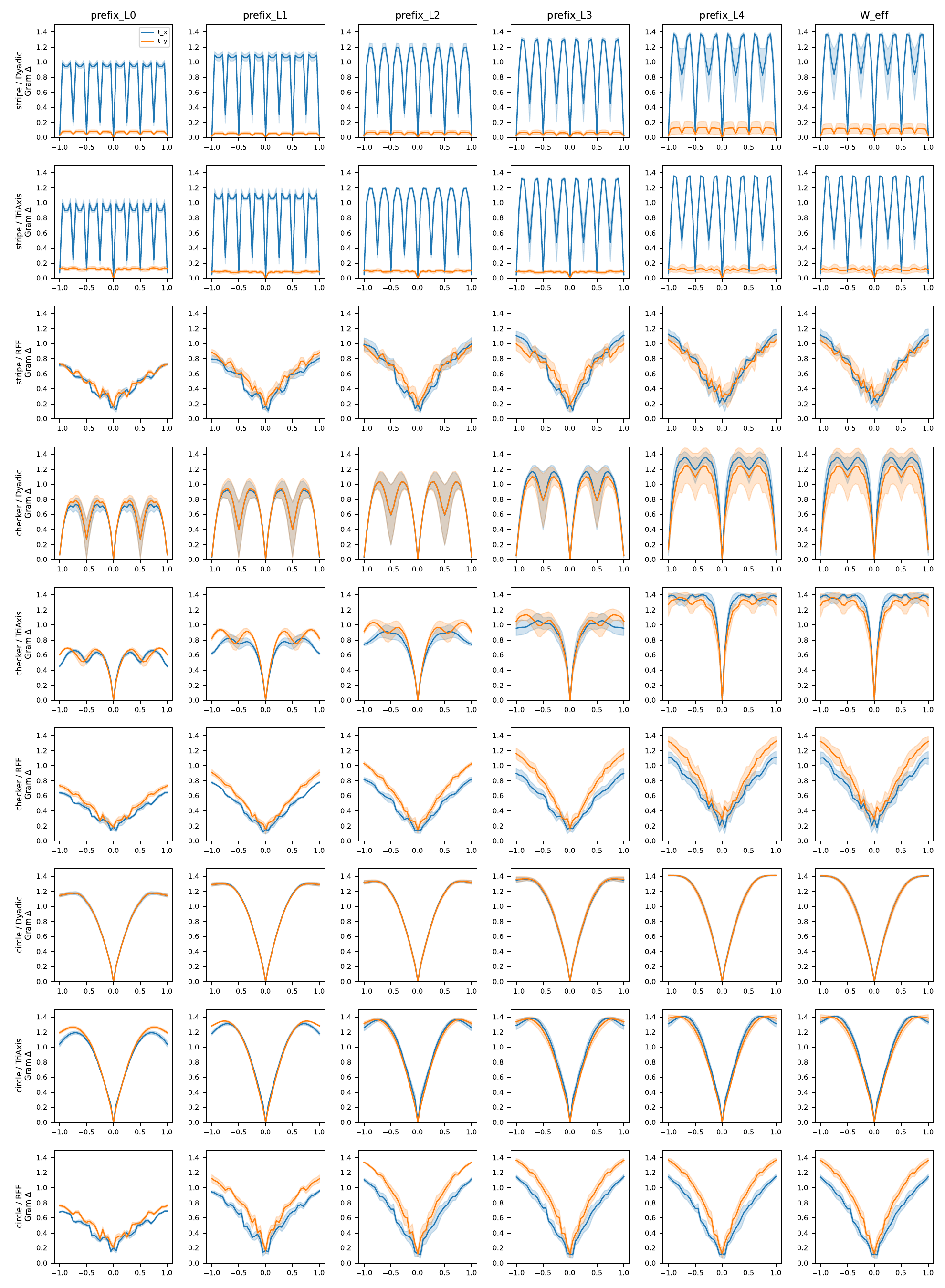}
\caption{Full-layer comparison of translation detection (3 shapes $\times$ 3 PEs $\times$ 6 observables). Rows correspond to shape / PE; columns to observables (prefix\_$L_0$--$L_4$, $W_\mathrm{eff}$).}
\label{fig:translation_all}
\end{figure}

Qualitative patterns are preserved across all observables (stripe: periodic dips under DyadicAxisPE/TriAxis, a single broad valley at $t=0$ under RFF; circle: only a $t=0$ valley under all three PEs). The dips are sharpest at $L_0$, consistent with the adoption of prefix Gram $L_0$.

\section{Layer Dependence of prefix Gram and Extension to Six Symmetry Groups}
\label{app:prefix_layers}

In the main text, we used prefix Gram $L_0$ ($G_0 = W_0^\top W_0$) as the weight-level observable for six representative shapes. This appendix (i) extends the analysis to six symmetry groups with additional shapes via a representative-angle probe at $L_0$, and (ii) supports the use of $L_0$ as the clearest readout among the tested observables by comparing all layers $L_0$--$L_4$ and $W_\mathrm{eff}$ Gram.

\paragraph{Representative-angle probe across 6 groups.}
This is a representative angle detection probe that evaluates the readout sensitivity at a single group-specific angle rather than summarizing the full group-integrated observability. The exact liftability bound patterns demonstrated with the six shapes are qualitatively maintained when extended to six symmetry groups ($O(2)$, $D_4$, $D_6$, $D_3$, $D_2$, trivial) including additional shapes (Figure~\ref{fig:observability_map_main}). Each group is probed at a single group-specific representative angle ($D_4$: $\pi/2$, $D_3$: $2\pi/3$, and so on). The structure whereby lattice groups at or below $D_4$ are stably detected with DyadicAxisPE, while $D_3$ remains non-detectable with DyadicAxisPE and RFF, is reproduced across all groups. Comparing across three PEs with TriAxisPE, DyadicAxisPE is superior for $D_4$ whereas TriAxis is superior for $D_3$, confirming the complementary pattern corresponding to each PE's exact-liftable group. However, within-group variation exists across shapes; per-shape prefix Gram results are available in Appendix~\ref{app:grid}.

\begin{figure}[htbp]
\centering
\includegraphics[width=0.9\linewidth]{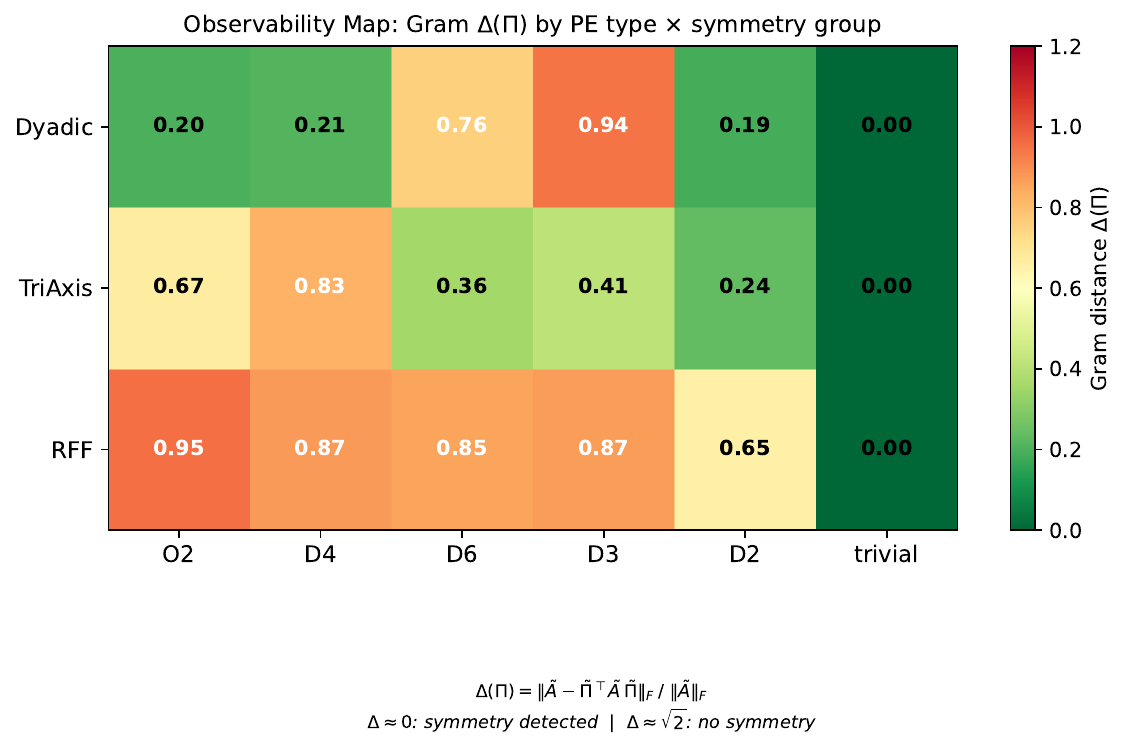}
\caption{Representative-angle detection probe (single-angle probe; not a full group-integrated score). Three PEs (DyadicAxisPE, TriAxis, RFF) $\times$ 6 symmetry groups ($O(2)$, $D_4$, $D_6$, $D_3$, $D_2$, trivial), prefix Gram $L_0$ $\Delta$ at a single group-specific angle ($D_4$: $90^{\circ}$, $D_6$: $60^{\circ}$, $D_3$: $120^{\circ}$, etc.). Each cell shows the Gram distance $\Delta(\Pi)$ averaged within each group (five seeds); $\Delta \approx 0$ indicates a low invariant-response score under the probed transform, $\Delta \approx \sqrt{2}$ indicates failure. DyadicAxisPE achieves the best score for $D_4$ whereas TriAxis achieves the best score for $D_3$/$D_6$, clearly showing the complementary pattern shaped by each PE's exact-liftable group. Within-group variation: Appendix~\ref{app:grid}. Full-layer comparison: Figure~\ref{fig:obs_map_all}.}
\label{fig:observability_map_main}
\end{figure}

\paragraph{Rotation response curves (full-layer version of Figure~\ref{fig:rotation_curves}).}
Figure~\ref{fig:rotation_curves_all} shows the rotation-angle response across all observables for 3 representative shapes. Shallower layers ($L_0$) produce sharper dips with smaller seed-to-seed variance, whereas the readout sensitivity decreases toward deeper layers. $W_\mathrm{eff}$ gram showed the lowest readout sensitivity.

\begin{figure}[htbp]
\centering
\includegraphics[width=0.8\linewidth]{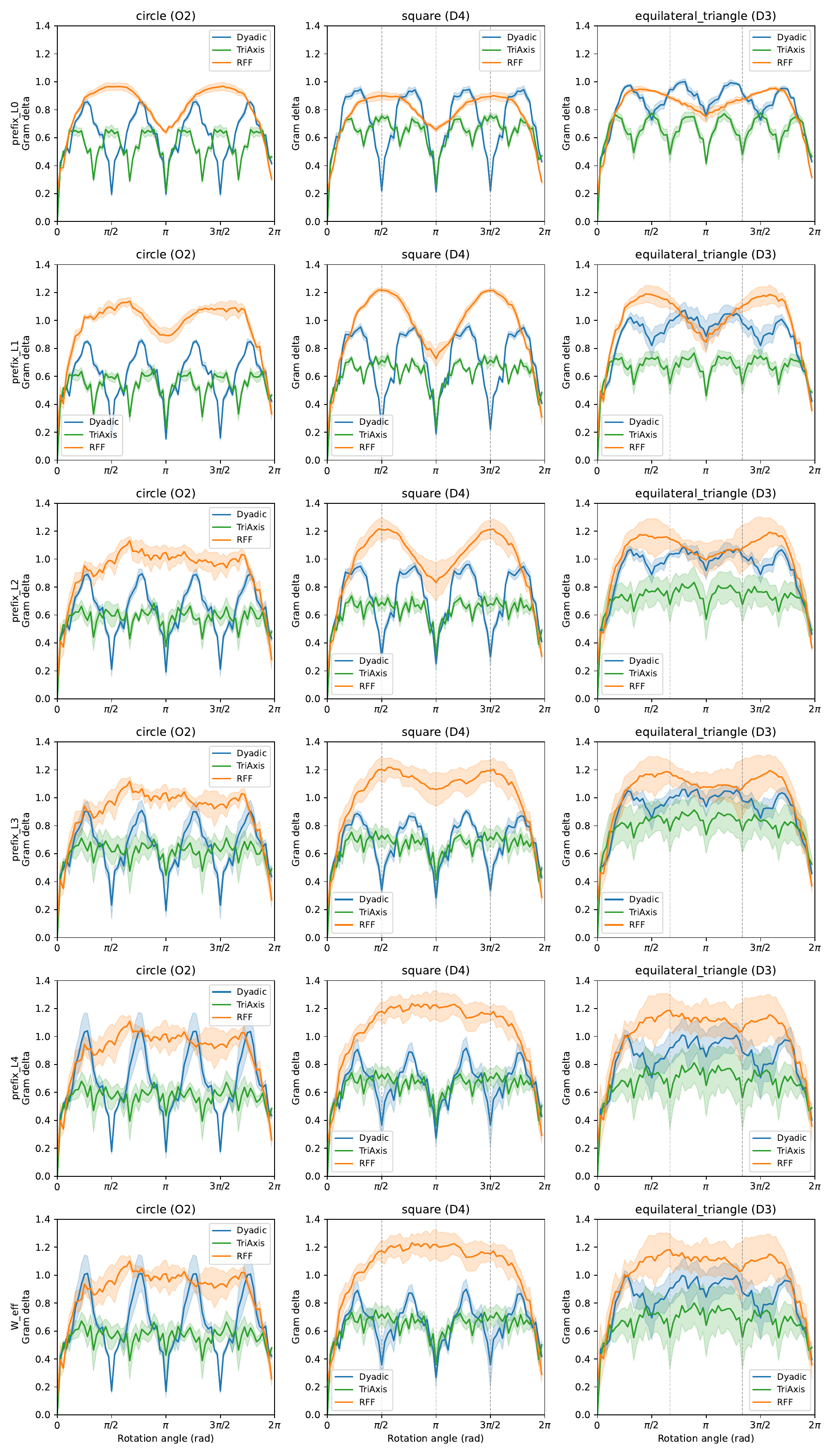}
\caption{Full-layer comparison of rotation response curves ($L_0$--$L_4$ + $W_\mathrm{eff}$). Each row corresponds to a different observable, and each column corresponds to a representative shape. Shallower layers produce sharper dips.}
\label{fig:rotation_curves_all}
\end{figure}

\paragraph{$D_3$ non-detectability (full-layer version of Figure~\ref{fig:d3_nondetect}).}
The non-detectability of $D_3$ is common across all layers, although the score levels differ by layer (Figure~\ref{fig:d3_nondetect_all}).

\begin{figure}[htbp]
\centering
\includegraphics[width=\linewidth]{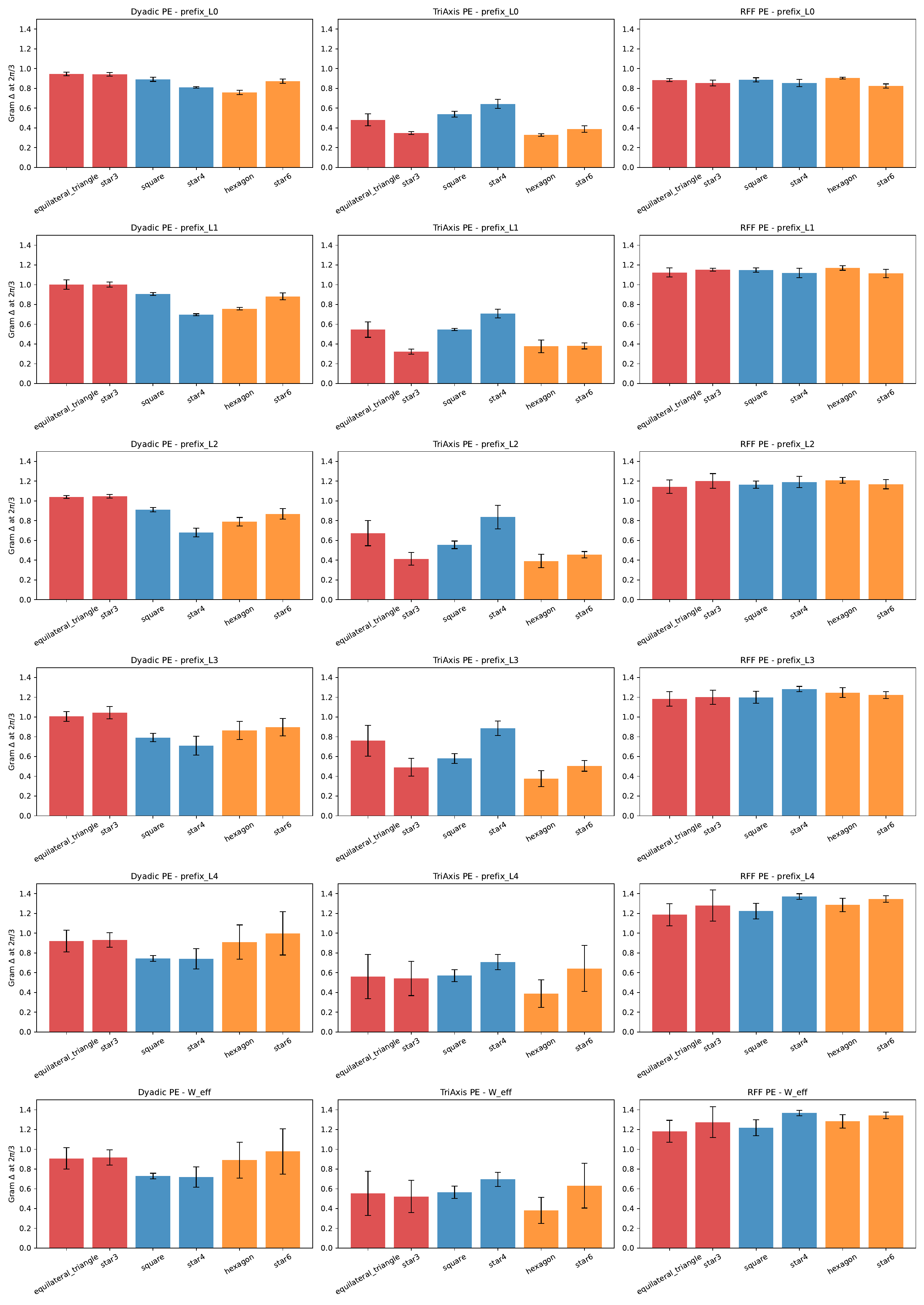}
\caption{Full-layer comparison of $D_3$ non-detectability (3 PEs: DyadicAxisPE, TriAxis, RFF).}
\label{fig:d3_nondetect_all}
\end{figure}

\paragraph{$D_3$ recovery (full-layer version at group-specific angles).}
The $D_3$ response recovery pattern via TriAxisPE (TriAxis panel of main-text Figure~\ref{fig:d3_nondetect}) is qualitatively maintained at group-specific angles across all layers (Figure~\ref{fig:d3_recovery_all}).

\begin{figure}[htbp]
\centering
\includegraphics[width=\linewidth]{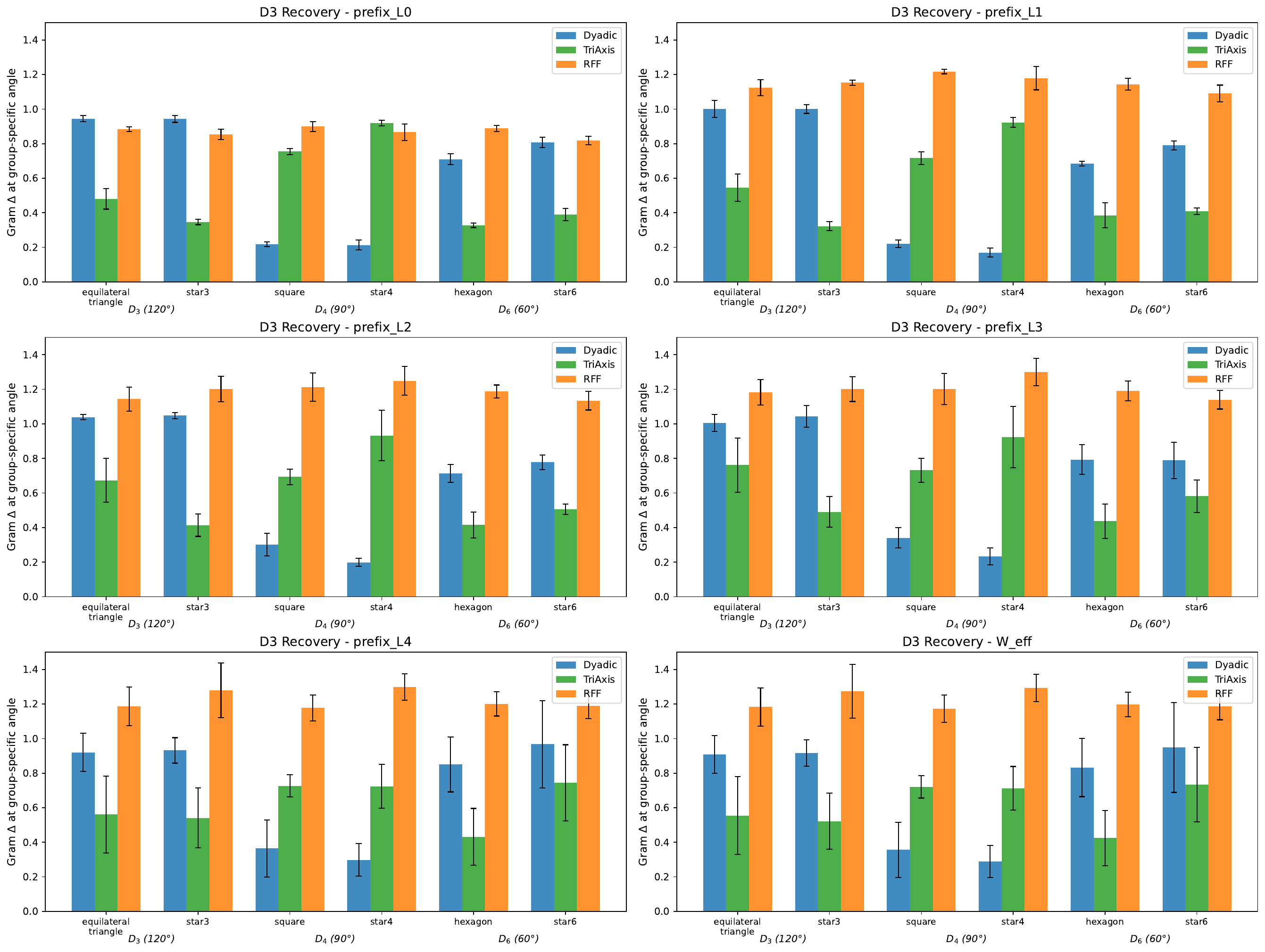}
\caption{Full-layer comparison of $D_3$ response recovery (6 observables $\times$ 3 PEs). Arranged as a 2-column $\times$ 3-row grid.}
\label{fig:d3_recovery_all}
\end{figure}

\paragraph{Observability map (full-layer version of Figure~\ref{fig:observability_map_main}).}
The observability maps for each layer are shown in Figure~\ref{fig:obs_map_all}.

\begin{figure}[htbp]
\centering
\includegraphics[width=\linewidth]{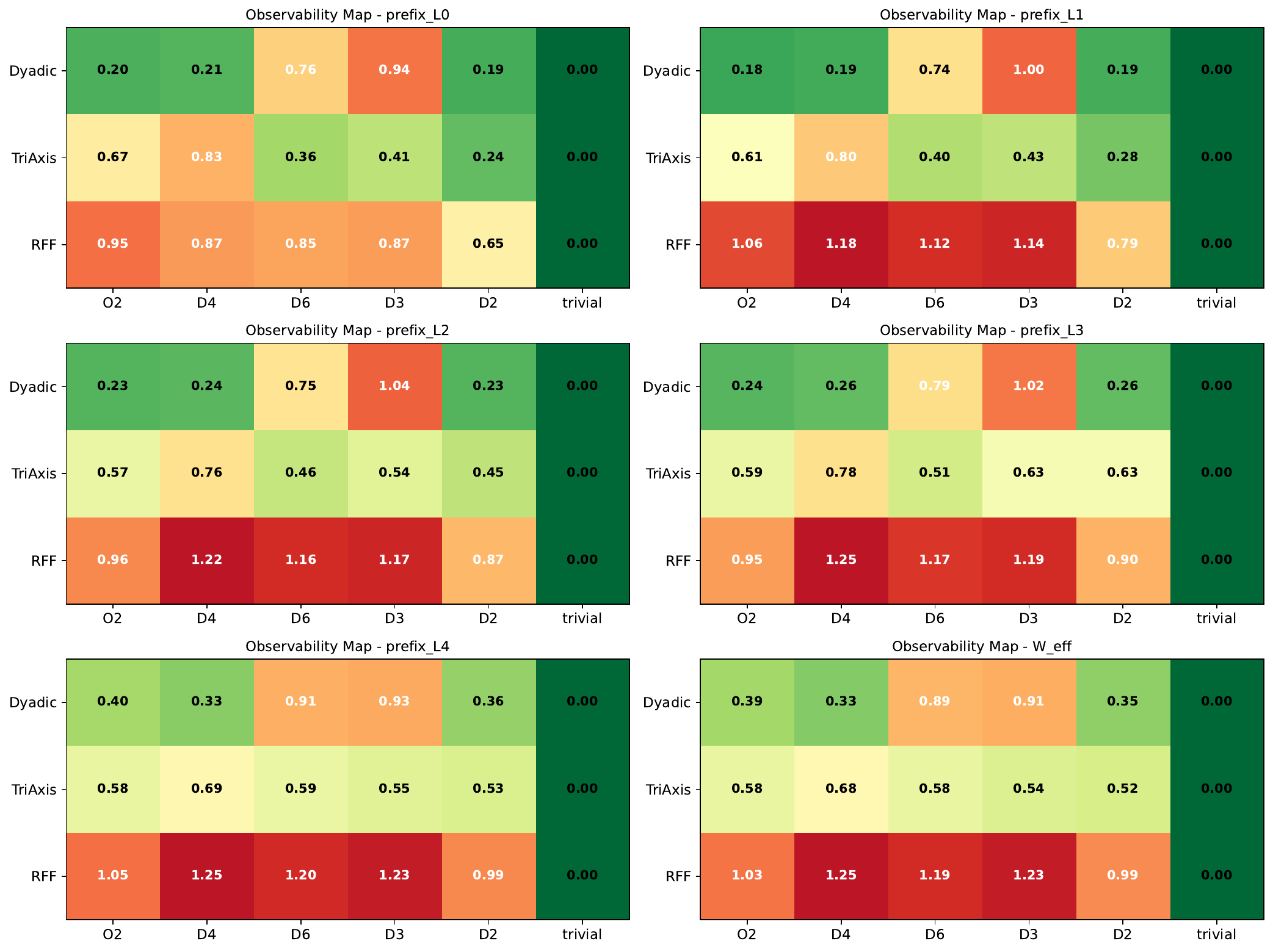}
\caption{Full-layer comparison of observability maps (3 PEs: DyadicAxisPE, TriAxis, RFF; 6 observables). Arranged as a 2-column $\times$ 3-row grid.}
\label{fig:obs_map_all}
\end{figure}

Table~\ref{tab:prefix_layer_summary} provides a quantitative summary of the results. $L_0$ achieves the best scores with the smallest seed-to-seed variance across all PEs, supporting the $L_0$ choice. The full-layer dependence of the translation detection is presented in Appendix~\ref{app:trans_all_layers} (Figure~\ref{fig:translation_all}).

\begin{table}[htbp]
\centering
\caption{Quantitative summary by observable. Mean score across all nine symmetry groups from the full 16 shapes ($O(2)$, $D_6$, $D_5$, $D_4$, $D_3$, $D_2$, $p4m$, $p1m$, trivial) in the observability map (within-group shape mean, then across-group mean; lower is better) and mean seed-to-seed standard deviation of rotation response (lower is more stable). $L_0$ is best on both metrics for all PEs.}
\label{tab:prefix_layer_summary}
\small
\begin{tabular}{l|ccc|ccc}
\hline
 & \multicolumn{3}{c|}{All-group mean score $\downarrow$} & \multicolumn{3}{c}{Rotation response seed std $\downarrow$} \\
Observable & DyadicAxisPE & TriAxis & RFF & DyadicAxisPE & TriAxis & RFF \\
\hline
Prefix $L_0$ & \textbf{0.44} & \textbf{0.54} & \textbf{0.79} & \textbf{0.019} & \textbf{0.021} & \textbf{0.024} \\
Prefix $L_1$ & 0.45 & 0.57 & 1.01 & 0.027 & 0.032 & 0.042 \\
Prefix $L_2$ & 0.50 & 0.62 & 1.07 & 0.039 & 0.052 & 0.060 \\
Prefix $L_3$ & 0.53 & 0.68 & 1.13 & 0.051 & 0.072 & 0.077 \\
Prefix $L_4$ & 0.65 & 0.67 & 1.18 & 0.102 & 0.083 & 0.101 \\
$W_\mathrm{eff}$ & 0.64 & 0.66 & 1.18 & 0.105 & 0.083 & 0.101 \\
\hline
\end{tabular}
\end{table}

\section{\texorpdfstring{Complete Results of the PE $\times$ Observable Grid}{Complete Results of the PE x Observable Grid}}
\label{app:grid}

\begin{figure}[t]
\centering
\includegraphics[width=\linewidth]{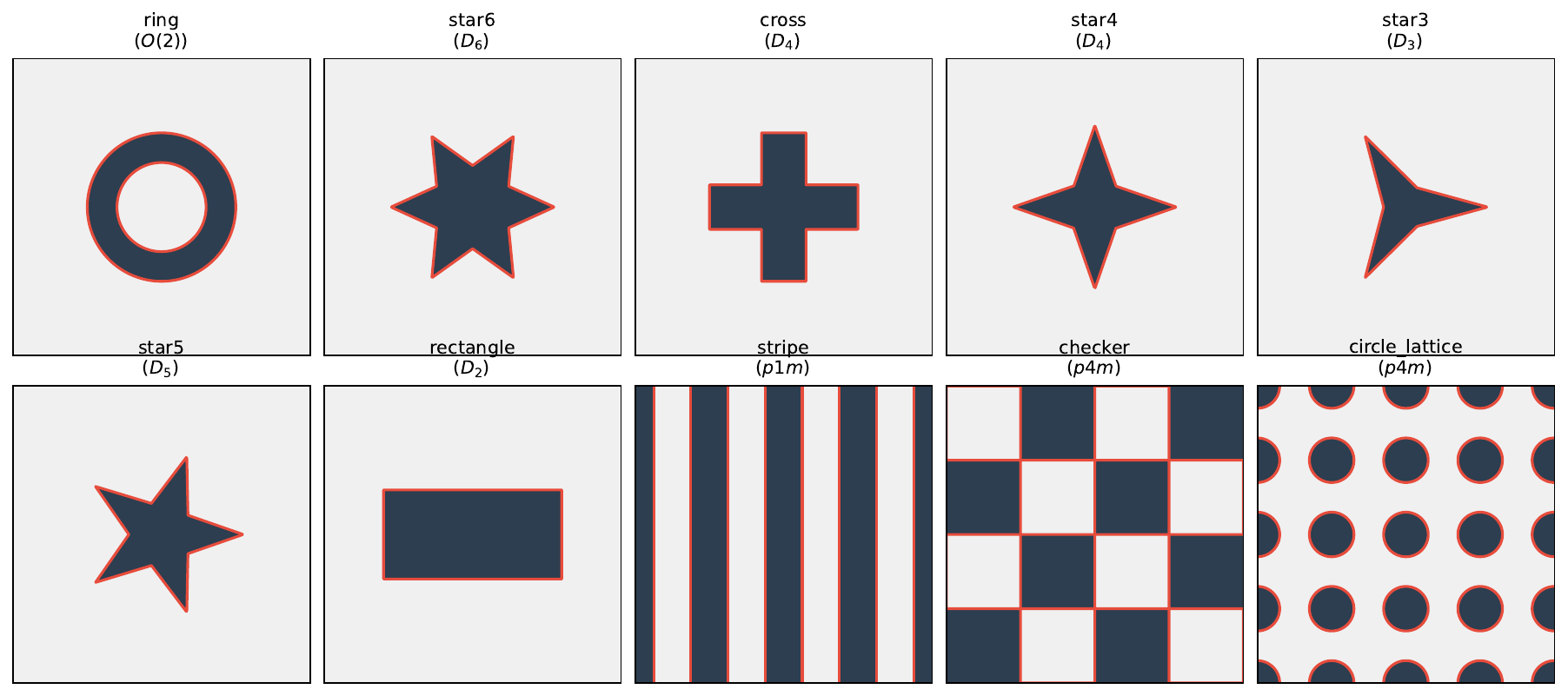}
\caption{Additional 10 shapes used in the Appendix (totaling 16 shapes with the 6 in the main text). This includes second representatives of each symmetry group (ring, star6, cross, star3, rectangle), additional groups (star4, star5), the translational pattern $p1m$ (stripe), and 2D translation patterns (checker, circle\_lattice). Stripe, checker, and circle\_lattice are used for the translation analysis in Appendix~\ref{app:translation}.}
\label{fig:shape_gallery_appendix}
\end{figure}

This appendix presents a systematic evaluation of four PEs $\times$ up to 11 observables. In particular, by comparing the effect sizes of the PE axis and the observable (gram layer/form) axis within weight-only observables, we document the complementary observation that \textbf{within weight-only readout, the PE axis dominates the observable axis}, supplementing the main text analysis. For completeness, we also include activation-level and function-level (output) observables; these lie outside the weight-only readout focus of this paper and are not used in our main claims.

\paragraph{Experimental design.}
The PE conditions were IdentityPE ($D_\mathrm{in}=2$), DyadicAxisPE K=12 (48 dimensions), TriAxis K=8 (48 dimensions), and RFF ($n=24$, 48 dimensions), totaling four types. The observables fall into three categories, totaling up to 11 types, defined as follows:

\textbf{(i) Weight Gram} (1 type): From the effective weight matrix $W_\mathrm{eff} = W_L \cdots W_0$ (product of all layer weights), we construct the extended Gram matrix $\tilde{A}$ and compute, for each of the 8 $D_4$ group elements with homogeneous orthogonal matrix $\tilde{\Pi}_g = \mathrm{diag}(\rho(g^{-1}), 1)$,
\[
\Delta(\Pi_g) = \frac{\|\tilde{A} - \tilde{\Pi}_g^\top \tilde{A}\, \tilde{\Pi}_g\|_F}{\|\tilde{A}\|_F}.
\]
The score is the mean $\Delta$ over the seven non-identity transforms ($\Delta \approx 0$: symmetry preserved; $\Delta \approx \sqrt{2}$: no symmetry). In this grid, DyadicAxisPE provides analytic $\tilde{\Pi}_g$ derived from the frequency matrix $K$, and TriAxis uses the Procrustes-estimated $\tilde{\Pi}_g$. Since $D_4$ is closed under inversion, the alternative convention $\tilde{\Pi}_g = \mathrm{diag}(\rho(g), 1)$ yields the same set of $\Delta$ values and is implementation equivalent.

\textbf{(ii) Weight-prefix Gram} ($L_0$--$L_4$, 5 types): For each layer $l$, the prefix weight product $P_l = W_l \cdots W_0$ yields the Gram matrix $P_l^\top P_l$, which is extended and scored with the same $D_4$ $\Delta$ as above. Unlike the effective-weight Gram, this preserves the layer-wise structure and is thus more likely to be symmetry-sufficient. In this grid, DyadicAxisPE admits analytic $\tilde{\Pi}_g$ and TriAxis uses Procrustes estimation; numerical estimation is possible for RFF but is not included here.

\textbf{(iii) Activation score} ($L_0$--$L_3$, 4 types): Using post-ReLU activations $h_l(x)$ at grid points $\{x_i\}$, we compute the normalized MSE over the 7 non-identity elements of $D_4$, $G_\mathrm{test} = \{R_{\pi/2}, R_\pi, R_{3\pi/2}, \sigma_{x_1}, \sigma_{x_2}, \sigma_{x_1=x_2}, \sigma_{x_1=-x_2}\}$ (with $|G_\mathrm{test}| = 7$):
\[
\mathrm{act\_score}_l = \frac{1}{|G_\mathrm{test}|} \sum_{g \in G_\mathrm{test}} \frac{\|h_l(x) - h_l(gx)\|^2}{\mathrm{Var}(h_l)}.
\]
Lower values indicate greater activation space invariance under $g$. Computable for all PE conditions.

\textbf{(iv) Output score} (1 type): The normalized MSE $\|f_\theta(x) - f_\theta(gx)\|^2 / \mathrm{Var}(f_\theta)$ of the network output $f_\theta(x)$, computed analogously to activation score. This directly measures the function-level symmetry and serves as a PE-independent upper-bound indicator. Computable for all PE conditions.

For IdentityPE and RFF, weight Gram and weight-prefix Gram are not included in this grid: IdentityPE is treated separately as a low-dimensional null condition, whereas RFF lacks a closed-form finite transform action for generic rotations and is therefore handled through operational Procrustes analyses elsewhere. TriAxis uses Procrustes-estimated PE action matrices for these observable variables. For IdentityPE and RFF, the grid produces only the activation and output scores. Their weight-side behavior is characterized separately in the IdentityPE-null analysis (see \S3.5 for the exact/operational regime distinction). Among the 16 shapes, 6 shapes (circle, ring, square, cross, star4, and circle\_lattice; henceforth \textbf{$D_4$-compatible 6 shapes})  carry all 8 elements of the origin-centered $D_4$ as true symmetries, and we pool median scores for each grid cell over these 6 shapes $\times$ 5 seeds. Because the $D_4$ transforms test $D_4$ symmetry, low scores for these shapes directly indicate stronger $D_4$-compatible readout responses.

\paragraph{Main results.}
Figure~\ref{fig:pe_observable_grid} shows the median detection scores across the PE $\times$ observable grid. Within the weight-only rows (weight Gram, weight-prefix Gram $L_0$--$L_4$), the spread across observables at a fixed PE is limited ($\sim 0.07$: DyadicAxisPE $0.16$--$0.26$, TriAxis $0.57$--$0.64$), whereas at a fixed observable, the PE-to-PE gap is large (e.g., prefix $L_0$: DyadicAxisPE $0.19$ vs TriAxis $0.61$, a $\sim 0.42$ difference), so the PE axis dominates the observable axis within the weight-only readout. In the activation rows, scores compress to $0.008$--$0.098$ across all PEs, and the PE-to-PE gap shrinks, reflecting a regime change from weight-only to activation-based readout, distinct from the within-weight-only PE dominance. Weight Gram and weight-prefix gram are restricted to DyadicAxisPE (analytic) and TriAxis (Procrustes) because analytic PE action matrices are unavailable for IdentityPE and RFF (gray cells). The overall ordering across observable categories is $W_\mathrm{eff}$ Gram ($\sim 0.25$) $>$ weight-prefix Gram ($0.16$--$0.26$) $>$ activation (act $L_3$ at $\sim 0.008$) $>$ output ($\sim 10^{-3}$--$10^{-6}$).

\paragraph{Comparison across PEs.}
DyadicAxisPE K=12 is the best in the weight-prefix gram category ($L_1$: $0.158$), substantially improving over $W_\mathrm{eff}$ gram ($0.253$). TriAxis K=8 has Procrustes-estimated weight-prefix Gram available (prefix $L_0$ median $0.610$, weight Gram $0.592$); its higher scores relative to DyadicAxisPE are consistent with the absence of $D_4$ exact lift. RFF lacks analytic weight-prefix gram in this grid, but achieves an activation score ($L_3$: $0.024$) comparable to DyadicAxisPE. IdentityPE has no weight-side observables in this grid (gray cells in the figure), and even act $L_3$ shows a score of $0.098$, approximately $8\times$ higher than the other PEs.

\subsection{IdentityPE: a null condition with insufficient redundancy}
\label{app:identity_pe}
IdentityPE ($\phi(x) = x$; output 2 dimensions, equivalent to no PE) served as a null condition with no feature expansion. Exact liftability formally holds since $\phi(gx) = g\,\phi(x)$, but the projection to two dimensions lacks the redundancy needed to encode the geometric structure in the weight space. Under the IdentityPE condition, symmetry group separation from weights was not reliably achieved with the tested weight-level observables (silhouette score: IdentityPE $=-0.133$ vs. DyadicAxisPE $=+0.285$, TriAxis $=+0.248$, RFF $=+0.027$). Meanwhile, output-level symmetry remains high (invariance ranges from $0.59$ to $1.00$ depending on the shape). The problem is not a failure of training, but rather that \textbf{without feature expansion by PE, reliable post-hoc recovery is difficult, at least with the weight-level observables used in this paper}. Unlike $D_3$ under DyadicAxisPE, whose limitation stems from the absence of an exact lift, IdentityPE's limitation is considered to arise from insufficient PE dimensionality ($D_\mathrm{in} = 2$) to provide sufficient redundancy for a reliable weight-level readout, despite the formal existence of an exact lift. In other words, exact liftability is necessary but insufficient for post-hoc detection. This is clearly an exploratory interpretation; disentangling the interaction with the estimator and observable choice, as well as additional controlled experiments (e.g., learned linear lift to higher dimensions), remains a task for future work.

\paragraph{TriAxisPE representative-angle probe results.}
The numerical details behind the TriAxisPE representative-angle probe reported in Figure~\ref{fig:observability_map_main} (Appendix~\ref{app:prefix_layers}) are as follows (prefix Gram $L_0$ $\Delta$, mean over five seeds). For $D_3$ shapes, TriAxisPE substantially outperformed DyadicAxisPE (star3: DyadicAxisPE $0.94$ $\to$ TriAxis $0.35$; equilateral\_triangle: $0.94$ $\to$ $0.48$). TriAxisPE is similarly superior for $D_6$ shapes (hexagon: DyadicAxisPE $0.71$ vs. TriAxis $0.33$; star6: DyadicAxisPE $0.81$ vs. TriAxis $0.39$). Conversely, for $D_4$ shapes, DyadicAxisPE is superior (square at $\pi/2$: DyadicAxisPE $0.22$ vs. TriAxis $0.75$). All structured PEs shared the same 48-dimensional representation ($4 \times 12 = 6 \times 8 = 2 \times 24 = 48$), so the score differences reflected the encoding structure rather than dimensionality.

\paragraph{Cost--detection trade-off.}
Weight-prefix Gram ($L_1$--$L_3$) achieves median scores of $0.158$--$0.203$ (DyadicAxisPE K=12) without activations, substantially improving over $W_\mathrm{eff}$ Gram ($0.253$) and offering a favorable trade-off between computational cost and readout sensitivity.

\begin{figure}[t]
\centering
\includegraphics[width=\linewidth]{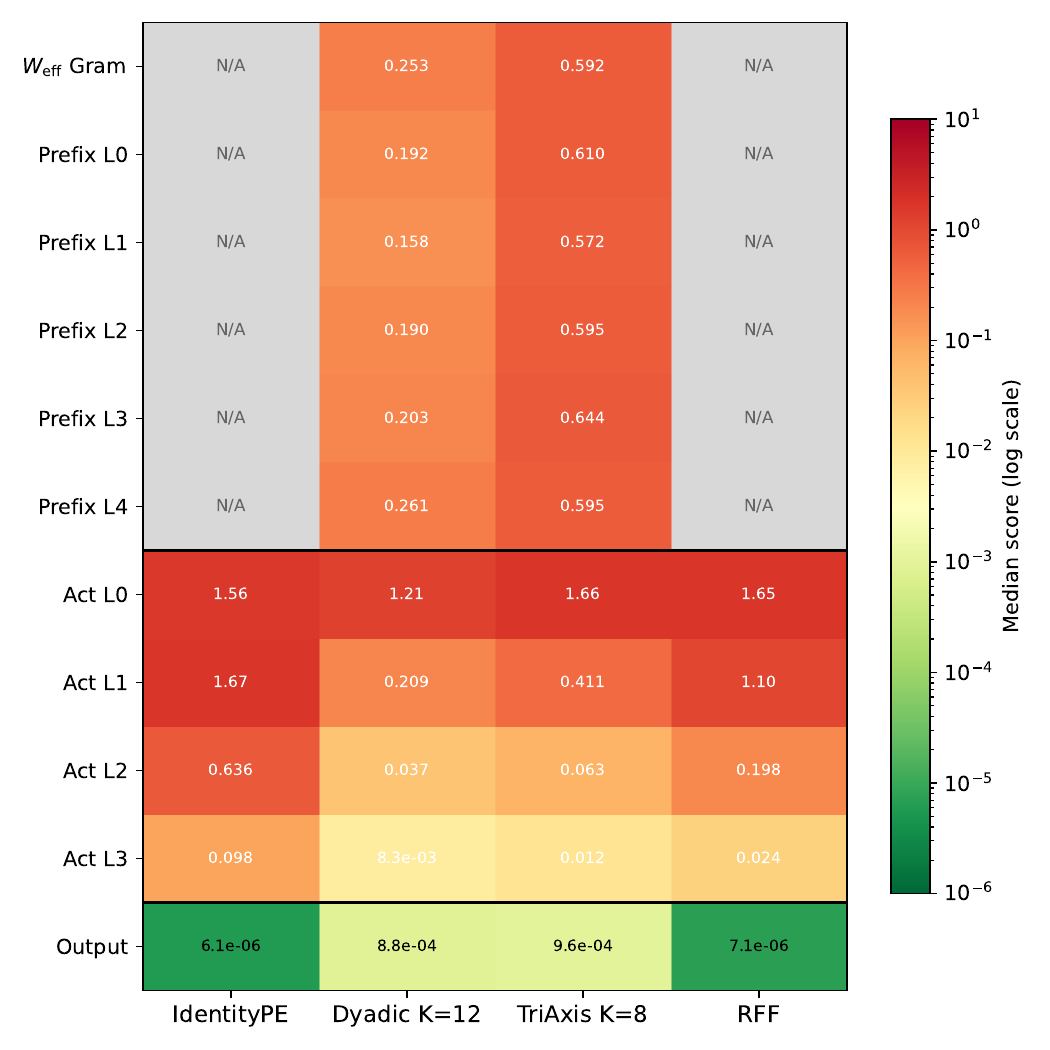}
\caption{Median scores over the 6 $D_4$-compatible shapes (circle, ring, square, cross, star4, circle\_lattice) across the PE $\times$ observable grid (4 PEs: IdentityPE, DyadicAxisPE, TriAxis, RFF; log scale; lower is better). Within the weight-only rows (prefix $L_0$--$L_4$, $W_\mathrm{eff}$ gram), the PE-to-PE gap is large, while the observable-to-observable spread within a fixed PE is small; switching to the activation rows compresses scores across all PEs. Gray cells indicate PE-observable combinations for which PE action matrices are unavailable. DyadicAxisPE uses analytic PE action matrices, whereas TriAxis uses Procrustes estimation for weight Gram and weight-prefix Gram. IdentityPE and RFF weight-side observables are gray cells.}
\label{fig:pe_observable_grid}
\end{figure}

\section{\texorpdfstring{Learning Dynamics of $D_\mathrm{op}$ and Threshold Sensitivity}{Learning Dynamics of D_{op} and Threshold Sensitivity}}
\label{app:dynamics}

\paragraph{Absence of false-positive symmetry detection and formation by training.}
A false positive occurs when $D_\mathrm{op}\not\subseteq G_\mathrm{true}$; we therefore check whether $D_\mathrm{op}\subseteq G_\mathrm{true}$ holds in the tested grid. $D_\mathrm{op} \not\subseteq G_\mathrm{true}$) is a prerequisite for the framework's utility; therefore, we first verify this empirically. Under DyadicAxisPE / Weight Gram with all eight elements of $D_4$ as the transform family, $D_\mathrm{op}$ was evaluated across the 6 main-text shapes $\times$ 3 seeds $\times$ 10 checkpoints (epoch 0--2000), yielding 0 false-positive detections in all 180 conditions (Figure~\ref{fig:trained_vs_random}). No false positives were observed, even at random initialization (epoch 0). Training progression selectively lowered the scores for transforms in $G_\mathrm{true}$, while the $D_3$ shape (equilateral\_triangle) maintained high scores across all epochs, confirming that the structural non-detectability is a stable pattern independent of training. Prefix Gram $L_0$ also showed 0 false positives under the same conditions. This verification is limited to these specific conditions and is not a theoretical guarantee of general PE--observable combinations.

\begin{figure}[t]
\centering
\includegraphics[width=\linewidth]{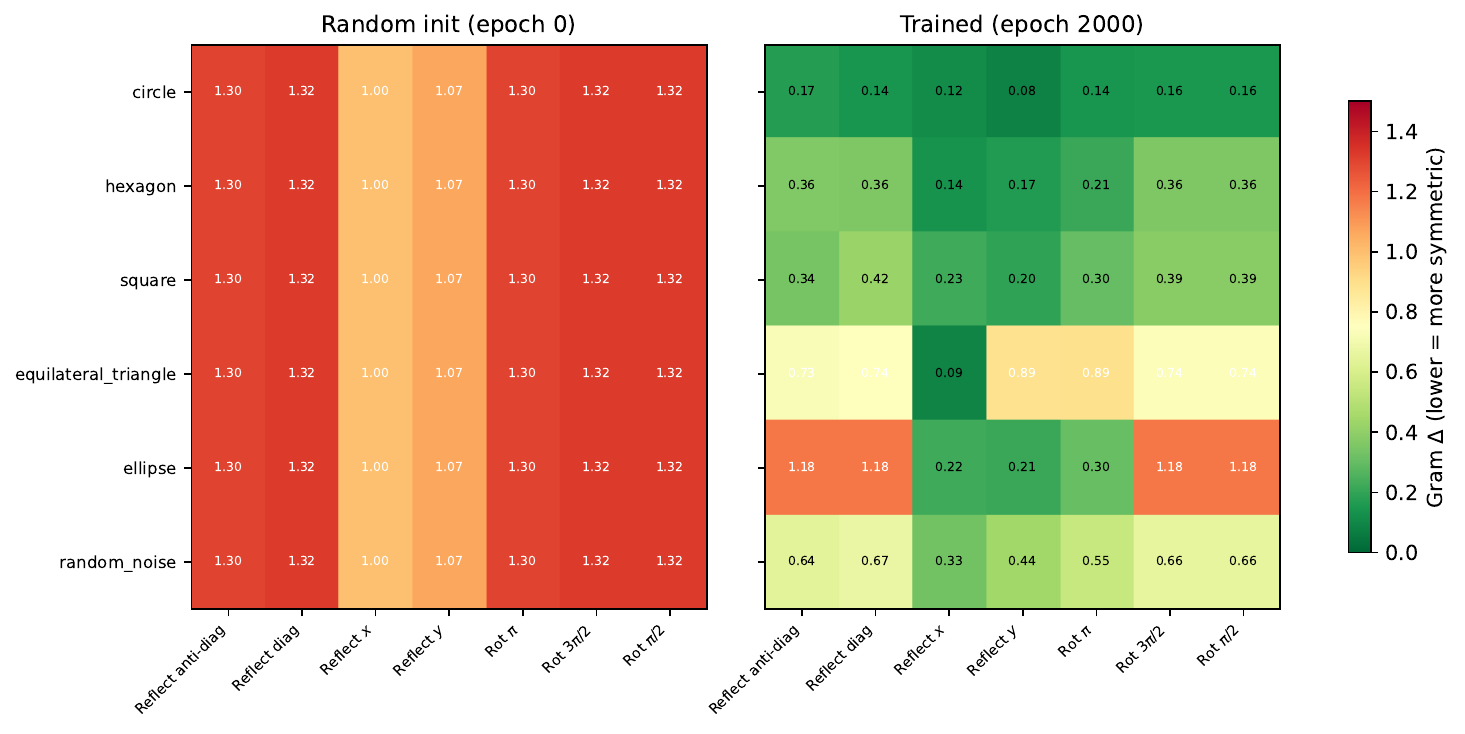}
\caption{Comparison of symmetry scores between random initialization (epoch 0) and after training (epoch 2000) (DyadicAxisPE K=12, Weight Gram, $D_4$ transforms). At initialization, scores are high for all transformations (no symmetry detected), and after training, the reductions are larger for transformations in $G_\mathrm{true}$, with the low-score detections concentrated on the true-symmetry side. This is consistent with the detected symmetries being formed by training rather than PE bias.}
\label{fig:trained_vs_random}
\end{figure}

\paragraph{Rationale for $\varepsilon = 0.05$ and threshold sensitivity.}
The Gram distance $\Delta$ is a normalized Frobenius distance $\|A - \Pi^\top A \Pi\|_F / \|A\|_F$; it equals 0 for exact symmetry and is $O(1)$ for unrelated transformations. Being a ratio, $\varepsilon = 0.05$ corresponds to a 5\% relative Frobenius error, which is a natural threshold for normalized scores. In our $D_4$ transform experiments (DyadicAxisPE K=12 / Weight Gram, epoch 2000), non-symmetry scores are all $\geq 0.24$, so $\varepsilon = 0.05$ provides a wide margin against false positives. Table~\ref{tab:threshold_comparison} compares detection counts across $\varepsilon = 0.02$--$0.10$: false positives remain 0 for $\varepsilon\le0.05$, while false positives first appear at $\varepsilon\ge0.07$. Because different observables produce raw scores at different scales, caution is required when applying a common threshold; however, the Gram distance $\Delta$ used in this work is a normalized score, and $\varepsilon = 0.05$ provides a robust operating point.

\begin{table}[t]
\centering
\caption{Detection results at different thresholds $\varepsilon$ (DyadicAxisPE K=12 / Weight Gram / $D_4$ transforms, epoch 2000, 6 shapes $\times$ 3 seeds).}
\label{tab:threshold_comparison}
\begin{tabular}{ccccc}
\hline
$\varepsilon$ & Detected & TP & FP & Conclusions stable \\
\hline
0.02 & 18 & 18 & 0 & yes \\
0.03 & 18 & 18 & 0 & yes \\
0.05 & 18 & 18 & 0 & yes \\
0.07 & 22 & 21 & 1 & no \\
0.10 & 27 & 25 & 2 & no \\
\hline
\end{tabular}
\end{table}

\section{Numerical Procrustes Estimation for RFF}
\label{app:rff_scaling}

For the RFF, analytic PE action matrices are unavailable; therefore, $\hat{\rho}(g)$ is constructed via numerical Procrustes estimation \cite{schonemann1966generalized}. The procedure is as follows: (i) sample input points $\{x_i\}_{i=1}^N$ uniformly from $[-1,1]^2$, (ii) compute transformed inputs $\{gx_i\}$, (iii) obtain PE outputs $X = [\phi(x_i)]^\top \in \mathbb{R}^{N \times D}$ and $Y = [\phi(gx_i)]^\top \in \mathbb{R}^{N \times D}$, (iv) compute the SVD of $M = Y^\top X$ as $M = U\Sigma V^\top$ and set $\hat{\rho}(g) = UV^\top \in O(D)$. The Procrustes residual $r_P(g) = \|Y - X\hat{\rho}(g)^\top\|_F / \|Y\|_F$ quantifies the estimation quality (Table~\ref{tab:procrustes}). In our experiments, $N = 512$ was used as default.

\end{document}